\crefname{defn}{Definition}{Definition}
\crefname{section}{Section}{Section}
\crefname{algorithm}{Algorithm}{Algorithm} 
\crefname{thm}{Theorem}{Theorem}
\crefname{lem}{Lemma}{Lemma}
\crefname{prop}{Proposition}{Proposition}
\crefname{asm}{Asm.}{Asm.}
\crefname{appendix}{Appendix}{Appx.}
\crefname{equation}{Equation}{Equations}
\crefname{figure}{Figure}{Figures}
\crefname{table}{Table}{Tables}
\crefname{cor}{Corollary}{Corollary}
\begin{document}

%

%

\twocolumn[

\aistatstitle{Rethinking Probabilistic Circuit Parameter Learning}

\aistatsauthor{Anji Liu\textsuperscript{\rm 1} \And Zilei Shao\textsuperscript{\rm 2} \And  Guy Van den Broeck\textsuperscript{\rm 2}}

\aistatsaddress{\textsuperscript{\rm 1}School of Computing, National University of Singapore\\

\textsuperscript{\rm 2}University of California, Los Angeles } ]

\begin{abstract}
    Probabilistic Circuits (PCs) offer a computationally scalable framework for generative modeling, supporting exact and efficient inference of a wide range of probabilistic queries. While recent advances have significantly improved the expressiveness and scalability of PCs, effectively training their parameters remains a challenge. In particular, a widely used optimization method, full-batch Expectation-Maximization (EM), requires processing the entire dataset before performing a single update, making it ineffective for large datasets. Although empirical extensions to the mini-batch setting, as well as gradient-based mini-batch algorithms, converge faster than full-batch EM, they generally underperform in terms of final likelihood. We investigate this gap by establishing a novel theoretical connection between these practical algorithms and the general EM objective. Our analysis reveals a fundamental issue that existing mini-batch EM and gradient-based methods fail to properly regularize distribution changes, causing each update to effectively ``overfit'' the current mini-batch. Motivated by this insight, we introduce an\textbf{em}one, a new mini-batch EM algorithm for PCs. An\textbf{em}one applies an implicit adaptive learning rate to each parameter, scaled by how much it contributes to the likelihood of the current batch. Across extensive experiments on language, image, and DNA datasets, an\textbf{em}one consistently outperforms existing optimizers in both convergence speed and final performance.
    
\end{abstract}

\section{Introduction}
\label{sec:intro}

Probabilistic Circuits (PCs) are a class of generative models that represent probability distributions by recursively composing simpler distributions through sum (mixture) and product (factorization) operations \citep{choi2020probabilistic}. The key idea behind PCs is to examine how tractable probabilistic models, such as Hidden Markov Models \citep{rabiner1986introduction}, perform inference (\eg computing marginal probabilities). PCs distill the structure of these models' computation graphs into a compact and general framework, which leads to a unified, computation-oriented perspective on tractable probabilistic modeling.

While significant progress has been made in improving the expressiveness of PCs through architectural innovations \citep{loconte2025sum,liu2021tractable} and system-level advancements \citep{liu2024scaling,peharz2020einsum}, there is still no clear consensus on how to effectively learn their parameters. Full-batch Expectation-Maximization (EM) and its empirical variants remain widely used approaches \citep{zhang2025scaling,liu2023image}. However, full-batch EM requires aggregating information across the entire dataset before each parameter update, making it hard to scale to large datasets or streaming settings. Although mini-batch extensions and gradient-based optimization methods can converge faster, they typically achieve lower final log-likelihood than full-batch EM.

Based on \citet{kunstner2021homeomorphic}, which studies the full-batch EM algorithm for exponential-family latent variable models, we discover that the full-batch EM update of PCs corresponds to optimizing a 1st order Taylor approximation, regularized by a Kullback–Leibler (KL) divergence that penalizes deviation from the current distribution. This yields a novel view of the full-batch EM update for PCs, which has appeared in various forms across different contexts \citep{peharz2015foundations,choi2021group,poon2011sum}.

This perspective naturally suggests a theoretically grounded mini-batch extension: by increasing the weight on the KL term, we can compensate for the reduced information available in a mini-batch compared to the full dataset. The resultant algorithm, an\textbf{em}one (``an EM one''), adaptively applies large learning rates only to the PC parameters that strongly influence the likelihood of the current batch. The other parameters are kept (almost) unchanged to prevent excessive distribution drift, thereby ensuring that updates remain consistent with the data distribution.

The theoretical insights also explain the inferior performance of existing EM- and gradient-based mini-batch algorithms, as they regularize the distribution shift before and after an update using the KL divergence of the local distributions defined at sum nodes\footnote{See \cref{sec:background} for the definition of sum nodes.} and the L2 distance in the parameter space, respectively. As a result, these methods fail to effectively minimize the distribution shift while learning to improve the likelihood given the current batch, causing each update to ``overfit'' to the current samples and impede overall convergence.

An\textbf{em}one admits a closed-form expression, making it efficient and easy to implement. We conduct extensive empirical evaluations on three types of datasets (language, image, and DNA) and four widely used classes of PC architectures. The results demonstrate that an\textbf{em}one consistently and significantly outperforms existing optimizers in both convergence speed and final likelihood.

\section{Background}
\label{sec:background}


\subsection{Distributions as Circuits}
\label{sec:bg-pc}

Probabilistic Circuits (PCs) represent probability distributions with deep and structured computation graphs that consist of sum and product operations \citep{choi2020probabilistic}. They serve as a general framework encompassing tractable probabilistic models, which are designed to support efficient and exact probabilistic inference over complex queries, such as Sum Product Networks \citep{poon2011sum}, cutset networks \citep{rahman2014cutset}, Hidden Markov Models \citep{rabiner1986introduction}, and Probabilistic Generating Circuits \citep{zhang2021probabilistic}. 
The syntax and semantics of PCs are as follows:

\begin{defn}[Probabilistic Circuit]
\label{defn:pc}
    A PC $\p$ over variables $\X$ is a directed acyclic computation graph with one single root node $n_{\mathrm{r}}$. Every input node (those without incoming edges) in $\p$ defines an univariate distribution over variable $X \in \X$. Every inner node (those with incoming edges) is either a \emph{product} or a \emph{sum} node, where each product node encodes a factorized distribution over its child distributions and each sum node represents a weighted mixture of its child distributions. Formally, the distribution $\p_{n}$ encoded by a node $n$ can be represented recursively as
        \begin{align}
            \!\!\!\p_{n} (\x) \!:=\!\! \begin{cases}
                f_{n} (\x) & n \text{~is~an~input~node}, \\
                \prod_{c \in \ch(n)} \p_{c} (\x) & n \text{~is~a~product~node},\!\!\! \\
                \sum_{c \in \ch(n)} \theta_{n,c} \!\cdot\! \p_{c} (\x) \!\!\!\!\! & n \text{~is~a~sum~node},
            \end{cases}
            \label{eq:pc-fwd}
        \end{align}
    \noindent where $f_{n}$ is an univariate primitive distribution defined over $X \in \X$ (\eg Gaussian, Categorical), $\ch(n)$ denotes the set of child nodes of $n$, and $\theta_{n,c} \geq 0$ is the parameter corresponds to the edge $(n,c)$ in the PC. Define the \emph{log-parameter} of $(n,c)$ as $\phi_{n,c} := \log \theta_{n,c}$, which will be used interchangeably with $\theta_{n,c}$. We further denote $\params := \{\phi_{n,c}\}_{(n,c)}$ as the set of all sum node parameters in the PC. Without loss of generality, we assume that every path from the root node to an input node alternates between sum and product nodes.\footnote{This can be efficiently enforced since we can directly ``collapse'' consecutive sum nodes or product nodes.}
\end{defn}

To ensure the exact and efficient computation of various probabilistic queries, including marginalization and moment calculations, we must impose structural constraints on the circuit. Specifically, smoothness and decomposability \citep{peharz2015theoretical} are a set of sufficient conditions that ensure tractable computation of marginal and conditional probabilities. This tractability arises because smooth and decomposable circuits represent multilinear functions, which are known to support efficient marginalization \citep{broadrick2024polynomial}. We provide details in \cref{appx:structural-props}.


PCs can be viewed as latent variable models with discrete latent spaces \citep{peharz2016latent}. Each sum node can be interpreted as introducing a discrete latent variable $Z$ that selects among its child distributions. Specifically, assigning $Z = i$ corresponds to choosing the $i$-th child of the sum node. By aggregating all such latent variables, the PC can be seen as defining a hierarchical latent variable model.

\subsection{Expectation-Maximization}
\label{sec:bg-em}

Expectation-Maximization (EM) is a well-known algorithm to maximize the log-likelihood given data $\x$ of a distribution defined by a latent variable model. Specifically, the distribution $\p_{\params} (\X)$ with parameters $\params$
is defined as $\sum_{\z} \p_{\params} (\X, \z)$ over latents $\Z$. Our goal is to maximize
    \begin{align}
        \LL (\params) := \log \p_{\params} (\x) = \log \Big ( \sum_{\z} \p_{\params} (\x, \z) \Big ).
        \label{eq:mle-obj}
    \end{align}
EM can effectively maximize the above objective when $\p_{\params} (\X, \Z)$ permits much simpler (or even closed-form) maximum likelihood estimation. It optimizes $\LL (\params)$ by maximizing the following surrogate objective:
    \begin{align}
        Q_{\params} (\params') := \sum_{\z} \p_{\params} (\z \given \x) \cdot \log \p_{\params'} (\x, \z).
        \label{eq:m-step-obj}
    \end{align}
EM updates the current parameters $\params$ by solving for $\params'$ that maximizes $Q_{\params} (\params')$, which is guaranteed to be a lower bound of $\LL (\params')$ since
    \begin{align*}
        Q_{\params} (\params') = \LL (\params') \!+\! \sum_{\z} \p_{\params} (\z \given \x) \cdot \log \p_{\params'} (\z \given \x) \leq \LL (\params').
    \end{align*}

\section{EM for Probabilistic Circuits}
\label{sec:em-for-pcs}

While variants of the EM algorithm have been proposed for training PCs in various contexts \citep{poon2011sum,peharz2015foundations}, their connection to the general EM objective $Q_{\params} (\params')$ (cf. Eq.~(\ref{eq:m-step-obj})) remains implicit. The lack of a unified formulation makes it difficult to fully understand the existing optimization procedures or to extend them to new settings, such as training with mini-batches of data, which is critical for scaling the optimizer to large datasets.

Specifically, there are multiple ways to define mini-batch EM algorithms that all reduce to the same full-batch EM algorithm in the limit. However, it is often unclear what objective these variants are optimizing in the mini-batch case, which complicates the design of new learning algorithms.

In this section, we bridge this gap by deriving EM for PCs explicitly from the general objective. 
In \cref{sec:full-batch-em}, we begin with a derivation for the full-batch case, showing how existing formulations can be recovered and interpreted from this viewpoint. We then extend the derivation to the mini-batch setting in \cref{sec:mini-batch-em}, leading to a principled and theoretically-grounded mini-batch EM algorithm for PCs.

\subsection{Revisiting Full-Batch EM}
\label{sec:full-batch-em}

Recall from \cref{defn:pc} that we define the log-parameter that corresponds to the edge $(n,c)$ as $\phi_{n,c} := \log \theta_{n,c}$, and the set of all parameters of a PC as $\params := \{\phi_{n,c}\}_{n,c}$.\footnote{We assume for simplicity that distributions of input nodes have no parameters (\eg indicator distributions). Our analysis can be easily extended to exponential family input distributions.} Since $\params$ does not necessarily define a normalized PC, we distinguish between the unnormalized and normalized forms of the model: let $\tilde{\p}_{\params} (\x)$ denote the unnormalized output of the circuit computed via the feedforward pass defined by \cref{eq:pc-fwd}, and define the normalized distribution as
    \begin{align*}
        \p_{\params} (\x) := \tilde{\p}_{\params} (\x) / Z (\params),
    \end{align*}
\noindent where $Z (\params)$ is the normalizing constant. We extend the single-sample EM objective in \cref{eq:m-step-obj} to the following, which is defined on a dataset $\data$:
    \begin{align*}
        Q_{\params}^{\data} (\params') := \frac{1}{\abs{\data}} \sum_{\x \in \data} \sum_{\z} \p_{\params} (\z \given \x) \cdot \log \p_{\params'} (\x, \z).
    \end{align*}
Our analysis is rooted in the following result.

\begin{prop}
\label{prop:em-general-form}
    Given a PC $\p_{\params}$ with log-parameters $\params$ (\cf Def.~\ref{defn:pc}) and a dataset $\data$, $Q_{\params}^{\data} (\params')$ equals the following up to a constant term irrelevant to $\params'$: 
        \begin{align}
            \frac{1}{\abs{\data}} \sum_{\x \in \data} \left \langle \frac{\partial \log \p_{\params} (\x)}{\partial \params}, \params' \right \rangle - \mathtt{KL}_{\params} (\params'),
            \label{eq:em-obj-equivalence}
        \end{align}
    \noindent where $\mathtt{KL}_{\params} (\params') := \kld \left ( \p_{\params} (\X, \Z) \,\|\, \p_{\params{\prime}} (\X, \Z) \right )$ is the KL divergence between $\p_{\params}$ and $\p_{\params'}$.
\end{prop}

The proof follows \citet{kunstner2021homeomorphic} and is provided in \cref{appx:proof-em-general-form}. \cref{prop:em-general-form} reveals that the EM update can be interpreted as maximizing a \emph{regularized first-order approximation} of the log-likelihood. To see this, we rewrite \cref{eq:em-obj-equivalence} by adding terms irrelevant to $\params'$:
    \begin{align*}
        \frac{1}{\abs{\data}} \sum_{\x \in \data} \underbrace{\log \p_{\params} (\x) + \left \langle \frac{\partial \log \p_{\params} (\x)}{\partial \params}, \params' - \params \right \rangle}_{\mathtt{LinLL}_{\params}^{\x} (\params')} - \mathtt{KL}_{\params} (\params').
    \end{align*}
In the above equation, the term $\mathtt{LinLL}_{\params}^{\x}(\params')$ corresponds to the linearization of $\log \p_{\params'} (\x)$ around the current parameters $\params$, capturing the local sensitivity of the log-likelihood to parameter changes. The KL term, $\mathtt{KL}_{\params}(\params')$, acts as a regularizer that penalizes large deviations in the joint distribution over $\X$ and $\Z$.

According to \cref{prop:em-general-form}, solving for the updated parameters $\params'$ requires computing two key quantities: the gradient of the log-likelihood $\partial \log \p_{\params} (\x) / \partial \params$, and the KL divergence $\mathtt{KL}_{\params} (\params')$. To express these terms in closed form, we introduce the concept of top-down probabilities, which is defined by \citet{dang2022sparse}.

\begin{defn}[TD-prob]
\label{defn:td-probs}
    Given a PC $\p$ parameterized by $\params$, we define the top-down probability $\mathtt{TD} (n)$ of a node $n$ recursively from the root node to input nodes:
        \begin{align*}
            \mathtt{TD} (n) := \begin{cases}
                1 & n \text{~is~the~root~node}, \\
                \sum_{m \in \pa (n)} \mathtt{TD} (m) & n \text{~is~a~sum~node}, \\
                \sum_{m \in \pa (n)} \theta_{m,n} \cdot \mathtt{TD} (m)\!\! & n \text{~is~a~product~node},
            \end{cases}
        \end{align*}
    \noindent where $\theta_{m,n} := \exp (\phi_{m,n})$ and $\pa(n)$ is the set of parent nodes of $n$. Define the TD-prob of $\phi_{m,n}$ as $\mathtt{TD} (\phi_{m,n}) := \theta_{m,n} \cdot \mathtt{TD} (m)$, and denote by $\mathtt{TD} (\params)$ the vector containing the TD-probs of all edge parameters in the circuit.
\end{defn}

Intuitively, the TD-prob of a parameter quantifies how much influence it has on the overall output of the PC, and in particular, on the normalizing constant $Z (\params)$.\footnote{This can be observed from the fact that $Z (\params)$ can be computed via the same feedforward pass (Eq.~(\ref{eq:pc-fwd})) except that we set the output of input nodes to $1$.} We continue to express the two key terms in \cref{prop:em-general-form} in closed form.

\begin{lem}
\label{lem:key-terms}
    Assume the distributions defined by all nodes in a PC are normalized. For every $\x$, we have:
        \begin{align*}
            \text{(i)~} & \partial \log \p_{\params} (\x) / \partial \params  = \partial \log \tilde{\p}_{\params} (\x) / \partial \params - \mathtt{TD} (\params), \\
            \text{(ii)~} & \mathtt{KL}_{\params} (\params') = - \langle \mathtt{TD} (\params), \params' \rangle + C,
        \end{align*}
    \noindent where $C$ is a constant term independent of $\params'$.
\end{lem}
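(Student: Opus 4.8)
The plan is to prove (i) and (ii) separately, in each case turning the quantity of interest into a pass over the circuit graph whose messages coincide with the top-down probabilities. For (i), write $\p_{\params}(\x) = \hat{\p}_{\params}(\x)/Z(\params)$, where $\hat{\p}_{\params}$ is the unnormalized feedforward output of \cref{eq:pc-fwd}, so that $\partial\log\p_{\params}(\x)/\partial\params = \partial\log\hat{\p}_{\params}(\x)/\partial\params - \partial\log Z(\params)/\partial\params$ and it suffices to show $\partial\log Z(\params)/\partial\params = \mathtt{TD}(\params)$. Recall that $Z(\params)$ is obtained by the same feedforward pass with every input node clamped to output $1$; under the normalization assumption, a bottom-up induction shows that every node takes value $1$ in this pass (an input distribution integrates to $1$; a product node is a product of such values by decomposability; a sum node equals $\sum_{c\in\ch(n)}\theta_{n,c}=1$), hence $Z(\params)=1$ and $\partial\log Z(\params)/\partial\params=\partial Z(\params)/\partial\params$. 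Now apply reverse-mode differentiation to this pass: letting $v_n$ denote the (unit) value at $n$ and $g_n := \partial Z(\params)/\partial v_n$, the chain rule gives $g_{n_{\mathrm{r}}}=1$; for a sum node $n$ its product parents contribute $\partial v_m/\partial v_n = \prod_{c\in\ch(m)\setminus\{n\}} v_c = 1$, so $g_n = \sum_{m\in\pa(n)} g_m$; for a product node $n$ its sum parents contribute $\partial v_m/\partial v_n = \theta_{m,n}$, so $g_n = \sum_{m\in\pa(n)} \theta_{m,n} g_m$. These are exactly the recursions defining $\mathtt{TD}$, so $g_n = \mathtt{TD}(n)$. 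Since $\phi_{m,n}$ enters the pass only at the sum node $m$, with $\partial v_m/\partial\phi_{m,n} = \theta_{m,n} v_n = \theta_{m,n}$, we get $\partial Z(\params)/\partial\phi_{m,n} = \theta_{m,n}\,\mathtt{TD}(m) = \mathtt{TD}(\phi_{m,n})$, which proves (i).

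For (ii), expand $\mathtt{KL}_{\params}(\params') = \sum_{\x,\z}\p_{\params}(\x,\z)\log\p_{\params}(\x,\z) - \sum_{\x,\z}\p_{\params}(\x,\z)\log\p_{\params'}(\x,\z)$; the first term (the negative entropy of $\p_{\params}(\X,\Z)$) is independent of $\params'$ and is absorbed into $C$. For the cross-entropy term I would invoke the latent-variable semantics of PCs \citep{peharz2016latent}: each sum node carries a latent that selects one of its children, a full assignment $\z$ induces a tree in the circuit, and $\log\p_{\params'}(\x,\z)$ equals the sum of $\phi'_{n,c}$ over the sum edges $(n,c)$ lying on that tree, plus parameter-free input log-densities, minus $\log Z(\params')$ --- the last two contributions being constant in $\params'$ over the locally normalized domain on which EM operates. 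Exchanging summation and expectation, the coefficient of each $\phi'_{n,c}$ becomes the probability under $\p_{\params}$ that edge $(n,c)$ is selected; by the recursive structure of the induced-tree distribution together with the normalization assumption, this probability equals $\theta_{n,c}\,\mathtt{TD}(n) = \mathtt{TD}(\phi_{n,c})$, which is the reach-probability reading of $\mathtt{TD}$ from \cref{defn:td-probs} \citep{dang2022sparse}. Hence the cross-entropy equals $\langle\mathtt{TD}(\params),\params'\rangle$ up to a $\params'$-free constant, giving $\mathtt{KL}_{\params}(\params') = -\langle\mathtt{TD}(\params),\params'\rangle + C$.

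I expect the main obstacle to be the normalization bookkeeping in (ii): pinning down the precise definition of the joint $\p_{\params'}(\X,\Z)$ for a circuit that need not itself be normalized, verifying that the global normalizer $Z(\params')$ and the input-density terms genuinely fall into the constant (which is cleanest when $\params'$ is kept locally normalized, the natural M-step domain), and deriving the identity ``selection probability of $(n,c)$ equals $\mathtt{TD}(\phi_{n,c})$'' rigorously from the recursive latent-variable construction rather than treating it as folklore. The backward-pass argument for (i) is comparatively routine, but it still relies on the assumed alternating sum/product structure to make the two recursions line up.
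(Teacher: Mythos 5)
Your proposal is correct; part (i) is essentially the paper's argument in different clothing, while part (ii) takes a genuinely different route. For (i), the paper proves by induction over the ancestors $m$ of a sum node $n$ that $Z_m(\params) = \mathtt{TD}_m(n)\cdot\exp(\phi_{n,c}) + C$ with $C$ independent of $\phi_{n,c}$, and then differentiates; your reverse-mode pass computes the same quantities as adjoints $g_n = \partial Z/\partial v_n$ and identifies them with the recursion of \cref{defn:td-probs}. Both hinge on every node outputting $1$ in the $Z$-pass (the normalization assumption) and on decomposability/alternation to make the product-node case work, so nothing substantive differs. For (ii), the paper never touches the latent space: it cites the recursive KL decomposition for structurally identical circuits \citep{vergari2021compositional}, observes that as a function of a single $\phi'_{n,c}$ this recursion has the same shape as the $Z$-recursion, and reuses the identical ancestor induction to get $\mathtt{KL}^m_{\params}(\params') = -\mathtt{TD}_m(n)\,\theta_{n,c}\,\phi'_{n,c} + C$. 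You instead expand $\kld\left(\p_{\params}(\X,\Z)\,\|\,\p_{\params'}(\X,\Z)\right)$ directly, write $\log\p_{\params'}(\x,\z)$ as a sum of $\phi'_{n,c}$ over induced-tree edges, and reduce the claim to the identity that the edge-selection probability under $\p_{\params}$ equals $\mathtt{TD}(\phi_{n,c})$. Your route makes the meaning of $\mathtt{TD}$ transparent, but it shoulders two burdens the paper outsources: (a) the reach-probability identity itself, which you rightly flag as needing its own proof --- it is established by essentially the same induction the paper runs, so no work is actually saved; and (b) the treatment of $Z(\params')$, which falls into the constant only on the locally normalized domain. The paper carries the same implicit restriction inside the cited KL recursion (which presupposes $\sum_{c\in\ch(n)}\exp(\phi'_{n,c})=1$ at each sum node), so this is not a defect of your proof relative to the paper's, but it is worth stating explicitly since \cref{lem:key-terms} is invoked before the normalization constraint is imposed in the Lagrangian step.
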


The assumption that the PC is normalized is mild and practical. In \cref{sec:em-vs-sgd} and \cref{appx:global-renorm}, we introduce a simple and efficient algorithm that adjusts the PC parameters to ensure normalization without affecting the structure of the circuit. We can now substitute the closed-form expressions for the gradient and the KL divergence into the general EM objective $Q_{\params}^{\data} (\params')$, which converts the problem into\footnote{We drop all terms that are independent of $\params'$.}
    \begin{align*}
        \left \langle \frac{1}{\abs{\data}} \sum_{\x \in \data} \frac{\partial \log \tilde{\p}_{\params} (\x)}{\partial \params} - \cancel{\mathtt{TD} (\params)}, \params' \right \rangle + \cancel{\langle \mathtt{TD} (\params), \params' \rangle}.
    \end{align*}
If we additionally require each node in the PC to define a normalized distribution, we impose the constraint $\sum_{c \in \ch(n)} \exp (\phi'_{n,c}) = 1$ for all sum nodes $n$. Incorporating these constraints into the EM objective results in a constrained maximization problem that has the following solution for every edge $(n,c)$ (see Appx.~\ref{appx:proof-constraint-em} for the derivation):
    \begin{align}
        \phi'_{n,c} = \log \theta'_{n,c}, \quad \theta'_{n,c} = \mathtt{F}_{\params}^{\data} (n, c) / Z,
        \label{eq:full-batch-em-params}
    \end{align}
\noindent where we define $\mathtt{F}_{\params}^{\data} (n, c) := \frac{1}{\abs{\data}} \sum_{\x \in \data} \frac{\partial \log \tilde{\p}_{\params} (\x)}{\partial \phi_{n,c}}$,\footnote{$\mathtt{F}_{\params}^{\data} (n, c)$ is to the PC flows defined by \citet{choi2021group}.} and $Z = \sum_{c' \in \ch(n)} \mathtt{F}_{\params}^{\data} (n, c')$ ensures that $n$ is normalized. 

While this full-batch EM algorithm in \cref{eq:full-batch-em-params} has been derived in prior work \citep{choi2021group,peharz2015foundations}, we recover it here through \cref{prop:em-general-form}. This paves the way for a principled mini-batch EM algorithm by generalizing the objective $Q_{\params}^{\data} (\params')$, as shown in the next section.


\subsection{Extension to the Mini-Batch Case}
\label{sec:mini-batch-em}

When the dataset is large, full-batch EM becomes inefficient and impractical as it requires scanning the entire dataset before making any parameter updates. In such cases, we instead wish to update the parameters after processing only a small subset of data points, which is commonly referred to as a mini-batch. Given a mini-batch of samples $\data$, \citet{peharz2020einsum} propose to update the parameters following:
    \begin{align}
        \theta'_{n,c} = (1 - \alpha) \cdot \theta_{n,c} + \alpha \cdot \mathtt{F}_{\params}^{\data} (n, c) / Z, \; (\alpha \in (0, 1))
        \label{eq:old-mini-em}
    \end{align}
\noindent where we borrow notation from \cref{eq:full-batch-em-params}. However, it remains unclear whether this update rule is grounded in a principled EM objective. In the following, we show that from the full-batch EM derivation, we can derive a mini-batch update rule that closely resembles the above, but with a crucial difference.

\cref{prop:em-general-form} expresses the EM objective as the sum of two terms: a linear approximation of the log-likelihood and a regularization term that penalizes deviation from the current model via KL divergence. When using only a mini-batch of samples, the log-likelihood may overlook parts of the data distribution not covered by the sampled subset. 

To account for this, we can put a weighting factor $\gamma > 1$ on the KL divergence (\ie $\mathtt{KL}_{\params} (\params')$ becomes $\gamma \cdot \mathtt{KL}_{\params} (\params')$).\footnote{Note that this is equivalent to $Q_{\params}^{\data} (\params') - (\gamma - 1) \cdot \mathtt{KL}_{\params} (\params')$ according to \cref{prop:em-general-form}.} Plugging in \cref{lem:key-terms} and dropping terms independent to $\params'$, the adjusted objective (\ie $Q_{\params}^{\data} (\params')$ with the additional weighting $\gamma$) becomes
    \begin{align*}
        \left \langle \mathtt{F}_{\params}^{\data}, \params' \right \rangle + (\gamma - 1) \cdot \langle \mathtt{TD} (\params), \params' \rangle,
    \end{align*}
\noindent where $\mathtt{F}_{\params}^{\data}$ collects all entries $\mathtt{F}_{\params}^{\data} (n, c)$ into a single vector, with each $\mathtt{F}_{\params}^{\data} (n, c)$ representing the aggregated gradient \wrt $\phi_{n,c}$. With the constraints that ensure each PC node defines a normalized distribution (\ie for each sum node $n$, $\sum_{c \in \ch(n)} \theta'_{n,c} \!=\! 1$), the solution is
    \begin{align}
        \theta'_{n,c} = \left ( \mathtt{TD}_{\params} (n) \cdot \theta_{n,c} + \eta \cdot F_{\params}^{\data} (n, c) \right ) / Z,
        \label{eq:ours-mini-em}
    \end{align}
\noindent where $\eta := 1 / (\gamma - 1)$ is the learning rate, $\mathtt{TD}_{\params} (n)$ is the TD-prob of node $n$ (Def.~\ref{defn:td-probs}), and $Z$ is a normalizing constant. The derivation is deferred to \cref{appx:proof-constraint-em}. In practice, compared to the full-batch EM update (Eq.~(\ref{eq:full-batch-em-params})), the only additional computation required is $\mathtt{TD}_{\params} (n)$, which can be efficiently implemented using any autodiff library to compute the gradient of $Z(\params)$ \wrt the log-parameters $\phi_{n,c}$ (see proof in Appx.~\ref{appx:proof-tdp-ad}).

To build intuition for the update rule, we consider the case where $\data$ contains only a single sample $\x$. In this setting, the update direction $\mathtt{F}_{\params}^{\x} (n, c)$ can be decomposed using the chain rule of derivatives:
    \begin{align*}
        \mathtt{F}_{\params}^{\x} (n, c) = \frac{\partial \log \tilde{\p}_{\params} (\x)}{\partial \phi_{n,c}} = \underbrace{\frac{\partial \log \tilde{\p}_{\params} (\x)}{\partial \log \tilde{\p}_{\params}^{n} (\x)}}_{\mathtt{F}_{\params}^{\x} (n)} \cdot \underbrace{\frac{\partial \log \tilde{\p}_{\params}^{n} (\x)}{\partial \phi_{n,c}}}_{\hat{\mathtt{F}}_{\params}^{\x} (n, c)}, 
    \end{align*}
\noindent where we define $\log \tilde{\p}_{\params}^{n} (\x)$ as the (unnormalized) log-likelihood of node $n$. A key observation is that the second term $\hat{\mathtt{F}}_{\params}^{\x} (n, c)$ is normalized \wrt all children of sum node $n$: $\sum_{c \in \ch(n)} \hat{\mathtt{F}}_{\params}^{\x} (n, c) = 1$ (see Appx.~\ref{appx:proof-normed-flows} for the proof). Intuitively, we now break down $\mathtt{F}_{\params}^{\x} (n, c)$ into the \emph{importance of node $n$} to the overall output (\ie $\mathtt{F}_{\params}^{\x} (n)$) and the \emph{relative contribution of child $c$} to $n$ (\ie $\hat{\mathtt{F}}_{\params}^{\x} (n, c)$). This simplifies \cref{eq:ours-mini-em} as
    \begin{align*}
        \theta'_{n,c} = \big ( \theta_{n,c} + \eta \cdot \mathtt{rel}_{\params}^{\x} (n) \cdot \hat{\mathtt{F}}_{\params}^{\x} (n, c) \big ) / Z,
    \end{align*}
\noindent where $\mathtt{rel}_{\params}^{\x} (n) := \mathtt{F}_{\params}^{\x} (n) / \mathtt{TD}_{\params} (n)$ can be viewed as the relative importance of $n$ to the PC output given $\x$. The term $\eta \cdot \mathtt{rel}_{\params}^{\x} (n)$ then acts as an adaptive learning rate for updating the child parameters of $n$, scaling the update magnitude according to how influential $n$ is for $\x$. In comparison, the mini-batch algorithm in \cref{eq:old-mini-em} uses a fixed learning rate for all parameters.

This difference is reflected in the example shown in \cref{fig:mini-em-example}. Given the PC on the left, which represents a mixture of two Gaussians (middle). Suppose we draw one sample $x \!=\! -1.5$. This sample does not reflect the full distribution and only activates the left mode. Our algorithm accounts for this by assigning a small effective learning rate to node $n_2$ ($\mathtt{rel}_{\params}^{x} (n_2) \!\approx\! 0.0004$) as it is ``not responsible'' for explaining this particular input and focuses the update on $n_1$ ($\mathtt{rel}_{\params}^{x} (n_1) \!\approx\! 1.9996$). In contrast, \cref{eq:old-mini-em} applies a uniform learning rate across all parameters, leading it to also update $n_2$ unnecessarily to fit the current sample.

\begin{figure}[t]
    \centering
    \includegraphics[width=\linewidth]{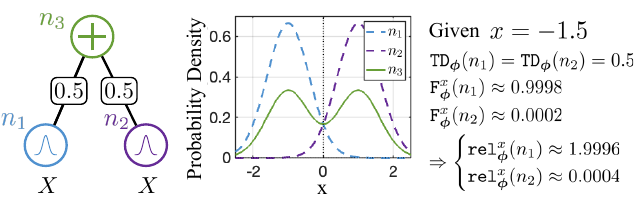}
    \vspace{-2.4em}
    \caption{The proposed algorithm implicitly applies an adaptive learning rate to each node. For the PC shown on the left, given a sample $x \!=\! -1.5$, the algorithm uses a large learning rate to update $n_1$ while keeping $n_2$ almost unchanged.}
    \label{fig:mini-em-example}
    \vspace{-0.4em}
\end{figure}

\section{Connections with Gradient-Based Optimizers}
\label{sec:em-vs-sgd}

Gradient-based optimization methods, such as stochastic gradient descent (SGD), can be interpreted through a lens similar to the EM formulation. Recall from \cref{prop:em-general-form} that the EM objective comprises a linear approximation of the log-likelihood, along with a KL divergence regularizer that penalizes deviations from the current model. In contrast, standard gradient-based updates can be viewed as maximizing the same linear approximation of the log-likelihood, but with an \emph{L2 regularization} penalty on parameter updates instead of a KL divergence:
    \begin{align*}
        \frac{1}{\abs{\data}}\! \sum_{\x \in \data}\! \log \p_{\params} (\x) \!+\! \left \langle \frac{\partial \log \p_{\params} (\x)}{\partial \params}, \params' \!- \params \right \rangle \!-\! \gamma \| \params' \!- \params \|_{2}^{2}.
    \end{align*}
Solving for $\params'$ gives $\params' = \params + \eta \cdot \partial \log \p_{\params} (\x) / \partial \params$, a standard gradient ascent step with $\eta = 1 / (2 \gamma)$. 

\boldparagraph{The Regularization Terms.}
While this analogy highlights a shared structure between EM and gradient-based optimizers, it also reveals a fundamental discrepancy. The KL divergence in the EM formulation is a natural measure of distance between distributions, while the L2 penalty in gradient-based updates only constrains the movement in parameter space. This distinction is important because proximity in parameter space does not necessarily translate to proximity in distribution space. For instance, adding all parameters $\params$ (note that they represent log-probabilities) by a constant leaves the distribution unchanged, yet the L2 penalty would still register this as a large deviation. Therefore, compared to gradient-based methods, the EM formulation better respects the geometry of distributions.

\boldparagraph{Normalization Constraints.}
Another notable difference between gradient-based methods and the EM algorithms (both the full-batch and the mini-batch ones) is the treatment of normalization constraints. In EM, parameters are updated in a way that preserves local normalization, \ie the edge parameters of each sum node $n$ are guaranteed to sum to one after every update. On the other hand, standard gradient-based optimization does not enforce this constraint, and thus intermediate parameter values may fall outside the normalized parameter space.

One might naturally wonder whether this discrepancy leads to better or worse training dynamics, since optimization would be carried out in an enlarged parameter space. Perhaps surprisingly, we show that it has \emph{no} effect on the optimization process, as the parameters $\params$ can always be mapped to a locally normalized counterpart $\params’$ that preserves both the represented distribution and the gradients with respect to the parameters.

Specifically, given a PC with parameters $\params$ (corresponds to $\boldsymbol{\theta} := \exp (\params)$), there exists a normalization algorithm that outputs $\params'$ and ensures (i) all parameters are locally normalized (\ie $\forall n, \sum_{c \in \ch(n)} \theta_{n,c} = 1$), (ii) the represented distributions are unchanged (\ie $\forall \x, \tilde{\p}_{\params} (\x) \propto \tilde{\p}_{\params'} (\x)$), and (iii) the gradients with respect to the parameters are preserved (\ie $\forall \x, \partial \tilde{\p}_{\params} (\x) / \partial \params = \partial \tilde{\p}_{\params'} (\x) / \partial \params'$.

Intuitively, the above three conditions guarantee that applying this normalization algorithm neither alters the model's probabilistic semantics nor interferes with the optimization dynamics. In other words, the model continues to represent the same distribution, and the gradients used for subsequent updates remain the same. Therefore, it can be applied after each gradient update without changing the learning dynamics.

In the following, we present the normalization algorithm and show how it can be implemented efficiently.\footnote{The existence of this normalization algorithm has been previously shown by \citet{martens2014expressive}, although they did not provide an explicit algorithm.} A detailed analysis and formal proofs of the three properties are provided in \cref{appx:global-renorm}.

\begin{figure*}
    \includegraphics[width=\textwidth]{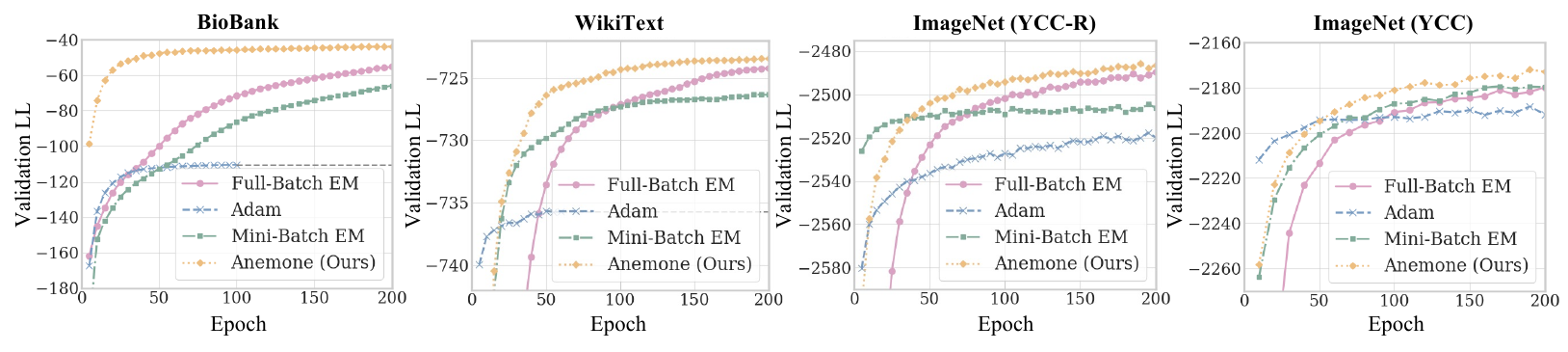}
    \vspace{-2.2em}
    \caption{\textbf{Log-Likelihood over epochs on four diverse datasets.} For the ImageNet (YCC-R) and ImageNet (YCC) datasets, an HCLT with hidden size 512 is used; for the WikiText dataset, an HMM with hidden size 256 is used; for the BioBank dataset, a PDHCLT with hidden size 1024 is used. An\textbf{em}one achieves significantly faster convergence as well as final LL across all four cases.}
    \label{fig:learning-curves}
\end{figure*}

\begin{table*}[ht]
\centering
\caption{\textbf{Negative LLs on the UK BioBank Chromosome 6 dataset. } An\textbf{em}one consistently and significantly outperforms all baseline across every tested model architecture. The results also show a clear performance hierarchy among the baseline methods. The best result in each column is marked in bold.}
\vspace{-0.9em}
\label{tab:density_estimation_dna}
\scalebox{0.9}{
\begin{tabular}{@{}l cccc@{}}
\toprule
\multirow{2}{*}[-0.3em]{\textbf{Optimizer}} & \multicolumn{4}{c}{\textbf{BioBank Chr6}} \\
\cmidrule(lr){2-5}
& \textbf{HCLT 512} & \textbf{HCLT 1024} & \textbf{PDHCLT 512} & \textbf{PDHCLT 1024}\\
\midrule
Full-batch EM & 55.3 & 53.8 & 46.5 & 45.1 \\
Adam & 102.7 &	100.4 & 112.4 & 110.4 \\
Mini-batch EM & 55.7 & 55.5 & 49.5 & 47.2 \\
An\textbf{em}one (Ours) & \textbf{54.5} & \textbf{52.1} & \textbf{45.3} & \textbf{42.2} \\
\bottomrule
\end{tabular}}
\vspace{-1.0em}
\end{table*}

\begin{table}[ht]
\centering
\caption{\textbf{Convergence speed (epochs) for PDHCLT 1024 on BioBank Chr6 Dataset.} The table reports epochs to reach specific LL thresholds, with $\Delta$ representing the difference from the best LL of -45.1 before an\textbf{em}one (\cref{tab:density_estimation_dna}). Lower is better. Bold marks the best result per column; $\infty$ indicates failure to reach the threshold in time.}
\vspace{-0.8em}
\label{tab:convergence_speed}
\setlength{\tabcolsep}{4pt} 
\begin{tabular}{@{}l ccc@{}}
\toprule
\textbf{Method} & \textbf{LL $\geq$ -48} & \textbf{LL $\geq$ -46} & \textbf{LL $\geq$ -45.1} \\
& \small($\Delta \approx 2.9$) & \small($\Delta \approx 0.9$) & \small($\Delta =0$) \\
\midrule
Full EM & 450 & 645 & 1000 \\
Adam & $\infty$ & $\infty$ & $\infty$ \\
Mini EM & 715 & $\infty$ & $\infty$ \\
An\textbf{em}one & \textbf{50} & \textbf{80} & \textbf{130} \\
\bottomrule
\end{tabular}
\vspace{-1.0em}
\end{table}

\boldparagraph{The Algorithm.}
The normalization procedure consists of two simple passes over the PC. First, we perform a feedforward evaluation of the PC to compute the partition function $Z_{n} (\params)$ at every node $n$:
    \begin{align*}
        Z_{\params} (n) \!=\! \begin{cases}
            1 & n \text{~is~an~input~node}, \\
            \prod_{c \in \ch(n)} Z_{\params} (c) & n \text{~is~a~product~node}, \\
            \sum_{c \in \ch(n)} \exp (\phi_{n,c}) \!\cdot\! Z_{\params} (c) \!\!\!\!\! & n \text{~is~a~sum~node}.
        \end{cases}
    \end{align*}
Next, for every edge $(n,c)$ where $n$ is a sum node and $c \in \ch(n)$, we update the parameter as
    \begin{align}
        \phi'_{n,c} = \log \left ( \frac{\theta_{n,c} \cdot Z_{\params} (c)}{Z_{\params} (n)} \right ).
        \label{eq:renorm}
    \end{align}

\section{Experiments}

In this section, we empirically evaluate an\textbf{em}one against existing EM- and gradient-based optimizers across a range of PC models and datasets. \cref{sec:exp-setup} introduces the models, datasets, and baseline optimizers. In \cref{sec:exp-overall}, we ask whether an\textbf{em}one can achieve higher log-likelihoods at convergence compared to existing approaches, and whether an\textbf{em}one converges faster. Finally, \cref{sec:exp-ablation} investigates the key design factors in setting hyperparameters through a series of ablation studies.

\subsection{Experimental Setup and Baselines}
\label{sec:exp-setup}

We conduct our empirical evaluation across three distinct domains (\ie DNA sequence, image, and text) using various PC architectures. 

\paragraph{DNA Sequence Modeling.}
We evaluate density estimation performance on a high-dimensional genomics dataset from the UK Biobank \citep{bioBank}. For this task, we train Hidden Chow-Liu Trees (HCLTs) \citep{liu2021tractable} and their variant that combines the PD structure \citep{poon2011sum} termed Partitioned Data HCLTs (PDHCLTs), which is defined in \cref{appx:pdhclt}. A detailed description of the dataset and all model configurations is provided in \cref{appx:datasets}.

\boldparagraph{Image Modeling.}
We adopt the ImageNet32 dataset \citep{deng2009imagenet} with two color transformations, \ie a lossy YCC transformation and its lossless variant YCC-R \citep{malvar2003ycocg}. Each 32$\times$32 image is partitioned into four 16$\times$16 image patches, which results in a total of $16 \times 16 \times 3 = 768$ variables. We employ HCLTs and PDHCLTs for both datasets. Details of the datasets and the models are included in \cref{appx:datasets}.

\boldparagraph{Language Modeling.}
We use the WikiText-103 \citep{merity2017pointer} dataset, which is widely used for language modeling. The dataset is preprocessed by the GPT-2 tokenizer \citep{radford2019language}. We evaluate the Hidden Markov Model (HMM) PC architecture and its recently proposed variant Monarch HMM \citep{zhang2025scaling} on it.

\boldparagraph{EM-Based Baselines.}
We adopt two EM baselines, which are the standard full-batch EM and the mini-batch EM proposed by \citet{peharz2020einsum} (\cf Eq.~(\ref{eq:old-mini-em})). For the full-batch EM, there are no hyperparameters to tune. In contrast, for the mini-batch EM, we tune the batch size in the range $\{512, 16384\}$ and the step size $\alpha$ in Eq.~(\ref{eq:old-mini-em}) over $\{0.05, 0.1, 0.4\}$. 

\begin{table*}[ht]
\centering
\caption{\textbf{Negative LLs on the ImageNet32 dataset's validation subset.} An\textbf{em}one consistently outperforms other baselines when training HCLT models. We observe a general performance ranking where Adam optimizer is the weakest, followed by Mini-batch EM and Full-batch EM. OOM denotes out-of-memory errors due to computing resource limitations. The best result in each column is marked in bold.}
\vspace{-0.8em}
\label{tab:density_estimation_img}
\scalebox{0.9}{
\begin{tabular}{@{}l ccc cc@{}}
\toprule
\multirow{2}{*}[-0.3em]{\textbf{Optimizer}} & \multicolumn{3}{c}{\textbf{ImageNet32 YCC-R}} & \multicolumn{2}{c}{\textbf{ImageNet32 YCC}} \\
\cmidrule(lr){2-4} \cmidrule(lr){5-6}
& \textbf{PDHCLT 256} & \textbf{HCLT 512} & \textbf{HCLT 1024} & \textbf{HCLT 512} & \textbf{HCLT 1024} \\
\midrule
Full-batch EM & \textbf{2529} & 2480 & \textbf{2469} & 2164 & 2163 \\
Adam & 2553 & 2518 & OOM & 2187 & OOM \\
Mini-batch EM & \textbf{2529} & 2506 & 2470 & 2179 & 2232 \\
An\textbf{em}one (Ours) & 2530 & \textbf{2477} & \textbf{2469} & \textbf{2158} & \textbf{2159} \\
\bottomrule
\end{tabular}}
\end{table*}


\begin{table*}[ht]
\centering
\caption{\textbf{Negative LLs on the WikiText-103 dataset.} An\textbf{em}one achieves the best LLs on three of the four tested model configurations and is highly competitive on the fourth. The best results are marked in bold.}
\vspace{-0.9em}
\label{tab:density_estimation_text}
\scalebox{0.9}{
\begin{tabular}{@{}l cccc@{}}
\toprule
\multirow{2}{*}[-0.3em]{\textbf{Optimizer}} & \multicolumn{4}{c}{\textbf{WikiText}} \\
\cmidrule(lr){2-5}
& \textbf{HMM 256} & \textbf{HMM 512} & \textbf{HMM 1024} & \textbf{Monarch HMM 1024}\\
\midrule
Full-batch EM & 722.6 & 702.2 & 682.8 & 738.1 \\
Adam & 735.7 & OOM & OOM & OOM \\
Mini-batch EM & 725.2 & 703.2 & \textbf{682.1} & 734.6 \\
An\textbf{em}one (Ours) & \textbf{722.2} & \textbf{701.7} & 682.3 & \textbf{734.0} \\
\bottomrule
\end{tabular}}
\vspace{-0.6em}
\end{table*}

\boldparagraph{Gradient-Based Baselines.}
We adopt the Adam optimizer \citep{adam2014method}, which is used by many recent works. We tune the batch size and the learning rate in the ranges $\{512, 1024\}$ and $\{1\times 10^{-2}, 3\times 10^{-3}, 5\times 10^{-3}, 1\times 10^{-3}\}$, respectively.

\boldparagraph{Our Method.}
For ease of definition, we express the step size of our method as $\alpha = \eta / (\eta - 1)$, where $\eta$ is given in \cref{eq:ours-mini-em}. The hyperparameter search is detailed in \cref{appx:optimizers}. Similar to the mini-batch EM algorithm, we use the proposed momentum update with $\beta = 0.9$.
\label{sec:momentum}
We propose to apply a momentum update to the flows $\mathtt{F}_{\params}^{\data} (n,c)$. Specifically, we initialize the momentum flows $\mathtt{Fm}_{\params}^{\data} (n,c) = \mathbf{0}$, then before every EM step, we update $\mathtt{Fm}_{\params}^{\data} (n,c)$ following:
    \begin{align*}
        \mathtt{Fm}_{\params}^{\data} (n,c) \leftarrow \beta \cdot \mathtt{Fm}_{\params}^{\data} (n,c) + (1-\beta) \cdot \mathtt{F}_{\params}^{\data} (n,c).
    \end{align*}
Finally, we replace the $\mathtt{F}_{\params}^{\data} (n,c)$ in \cref{eq:old-mini-em} with $\mathtt{Fm}_{\params}^{\data} (n,c) / (1 - \beta^{T+1})$, where $T$ is the number of updates performed. We compare the performance with and without the momentum update for both an\textbf{em}one and mini-batch EM in \cref{sec:exp-ablation}.

\subsection{Overall Performance and Convergence}
\label{sec:exp-overall}

We begin by examining the training dynamics of different optimizers. \cref{fig:learning-curves} shows the training curves (\ie validation LL vs. number of epochs) across all four datasets, each paired with an appropriate PC architecture (see the caption for the details).

We start by focusing on the three baseline approaches, \ie full-batch EM, Adam, and mini-batch EM. A consistent trend is that full-batch EM consistently reaches better (or comparable) performance at convergence. This is a strong indicator that the existing mini-batch optimizers (Adam and mini-batch EM) fail to compensate for the reduced information available in a mini-batch compared to the full dataset.

In contrast, despite being a mini-batch algorithm, an\textbf{em}one achieves significantly better performance at convergence, even compared to full-batch EM. We evaluate the final log-likelihoods of each optimizer across a wider range of PC architectures. The final log-likelihoods, evaluated either at convergence or after a fixed maximum number of epochs, are shown in \cref{tab:density_estimation_dna,tab:density_estimation_img,tab:density_estimation_text} for the DNA, image, and text modeling tasks, respectively. On the BioBank Chr6 dataset, an\textbf{em}one achieves consistent and significant performance gains over all baselines. For the ImageNet32 and WikiText datasets, it again obtains better results in the majority of cases.


Additionally, as indicated by \cref{fig:learning-curves}, an\textbf{em}one converges faster than all baselines, including the mini-batch ones that are designed for faster convergence. We further quantify the number of epochs used to reach a certain LL. Specifically, as shown in \cref{tab:convergence_speed} an\textbf{em}one requires $\sim$8x fewer epochs to reach the same validation LL. Further experiments are deferred to \cref{appx:exp-convergence}. 



\subsection{Ablation Study}
\label{sec:exp-ablation}

To disentangle the performance gains of an\textbf{em}one from the general effects of momentum (cf. \cref{sec:momentum}), we conduct ablation study shown in \cref{fig:ablation_momentum}. While the results confirm that momentum improves the final log-likelihood for an\textbf{em}one (left panel), they also show that it provides little benefit when applied to vanilla mini-batch EM (right panel), suggesting that the improvements are not from momentum alone, but from the synergy between momentum and an\textbf{em}one, that is not observed in standard mini-batch approaches.

\begin{figure}[t]
    \centering
    \includegraphics[width=\linewidth]{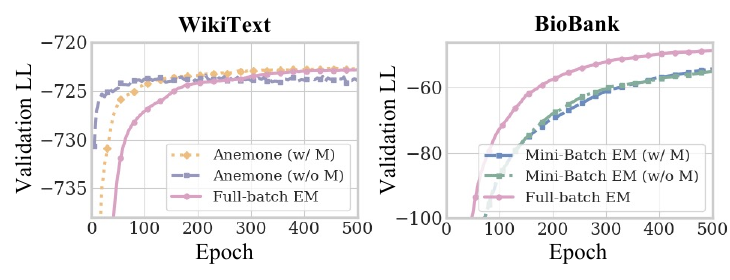}
    \vspace{-2.2em}
    \caption{\textbf{Ablation study on the effect of momentum when combined with an\textbf{em}one and vanilla mini-batch EM, respectively.} \textbf{Left: } For an\textbf{em}one optimizer (HMM with hidden size of 256 on WikiText), incorporating momentum improves the final log-likelihood despite slightly slower initial convergence, while still being significantly faster than  full-batch EM. \textbf{Right: }In contrast, for mini-batch EM (PDHCLT with hidden size of 512 on BioBank Chr6), momentum provides little benefit.}
    \label{fig:ablation_momentum}
\end{figure}

\section{Related Works and Conclusion}

\boldparagraph{Modeling Advancements in PCs.}
The development of PCs has been marked by significant progress in enhancing their expressiveness and utility as generative models. Since the initial establishment of PCs, research has focused on developing more expressive and scalable PC structures. Specifically, a line of research has sought to design PC structures that are expressive yet parameter-efficient \citep{rahman2014cutset,adel2015learning,liu2021tractable}, while another set of approaches pursues iterative structure learning strategies that progressively expand the model's capacity \citep{liang2017learning,dang2022sparse,di2021random,liu2023understanding}. Both directions have contributed to significant performance gains on widely used text and image datasets.

\boldparagraph{Parameter Learning of PCs.}
Beyond structure, a central challenge in learning PCs lies in the optimization of their parameters. This problem has been studied from both the algorithmic and the systems perspective. On the algorithmic side, two families of approaches dominate: EM-style updates and gradient-based methods. The EM algorithm was first applied to PCs by \citet{poon2011sum}, and later extended to mini-batch training in \citet{peharz2020einsum}. Gradient-based optimizers such as Adam \citep{kingma2014adam} have also become a common choice in practice due to their simplicity and scalability.

On the systems side, considerable effort has gone into developing efficient implementations that can handle the large computational and memory demands of PCs. Optimized einsum backends \citep{peharz2020einsum}, specialized libraries such as SPFlow \citep{molina2019spflow}, and more recently PyJuice \citep{liu2024scaling} provide high-performance primitives that enable scaling to PCs with billions of parameters.

\boldparagraph{Conclusion.}
This work addresses the underperformance of mini-batch optimizers for PCs. We identify that existing methods ineffectively regularize distribution changes, causing them to ``overfit'' to the current mini-batch. We propose anemone, a novel mini-batch EM algorithm that applies an implicit adaptive learning rate to each parameter, scaled by its contribution to the batch likelihood. Across extensive experiments, anemone consistently outperforms existing optimizers in both convergence speed and final performance.

\section*{Acknowledgements}

This work was funded in part by the National University of Singapore under its Start-up Grant (Award No: SUG-251RES2505), the DARPA ANSR, CODORD, and SAFRON programs under awards FA8750-23-2-0004, HR00112590089, and HR00112530141, NSF grant IIS1943641, and gifts from Adobe Research, Cisco Research, Qualcomm, and Amazon. Approved for public release; distribution is unlimited.

\bibliography{aistats}

\section*{Checklist}

\begin{enumerate}

  \item For all models and algorithms presented, check if you include:
  \begin{enumerate}
    \item A clear description of the mathematical setting, assumptions, algorithm, and/or model. [Yes]
    \item An analysis of the properties and complexity (time, space, sample size) of any algorithm. [Yes]
    \item (Optional) Anonymized source code, with specification of all dependencies, including external libraries. [Yes/No/Not Applicable]
  \end{enumerate}

  \item For any theoretical claim, check if you include:
  \begin{enumerate}
    \item Statements of the full set of assumptions of all theoretical results. [Yes]
    \item Complete proofs of all theoretical results. [Yes]
    \item Clear explanations of any assumptions. [Yes]     
  \end{enumerate}

  \item For all figures and tables that present empirical results, check if you include:
  \begin{enumerate}
    \item The code, data, and instructions needed to reproduce the main experimental results (either in the supplemental material or as a URL). [Yes]
    \item All the training details (e.g., data splits, hyperparameters, how they were chosen). [Yes]
    \item A clear definition of the specific measure or statistics and error bars (e.g., with respect to the random seed after running experiments multiple times). [No]
    \item A description of the computing infrastructure used. (e.g., type of GPUs, internal cluster, or cloud provider). [Yes]
  \end{enumerate}

  \item If you are using existing assets (e.g., code, data, models) or curating/releasing new assets, check if you include:
  \begin{enumerate}
    \item Citations of the creator If your work uses existing assets. [Yes]
    \item The license information of the assets, if applicable. [Yes]
    \item New assets either in the supplemental material or as a URL, if applicable. [Not Applicable]
    \item Information about consent from data providers/curators. [Yes]
    \item Discussion of sensible content if applicable, e.g., personally identifiable information or offensive content. [Not Applicable]
  \end{enumerate}

  \item If you used crowdsourcing or conducted research with human subjects, check if you include:
  \begin{enumerate}
    \item The full text of instructions given to participants and screenshots. [Not Applicable]
    \item Descriptions of potential participant risks, with links to Institutional Review Board (IRB) approvals if applicable. [Not Applicable]
    \item The estimated hourly wage paid to participants and the total amount spent on participant compensation. [Not Applicable]
  \end{enumerate}

\end{enumerate}

\clearpage
\appendix
\thispagestyle{empty}

\onecolumn
\aistatstitle{Rethinking Probabilistic Circuit Parameter Learning: \\
Supplementary Materials}

\section{Structural Properties of PCs}
\label{appx:structural-props}

We provide formal definitions of smoothness and decomposability. Please refer to \citet{choi2020probabilistic} for a comprehensive overview.

\begin{defn}[Smoothness and Decomposability]
\label{defn:sm-dec}
    Define the scope $\scope (n)$ of a PC node $n$ as the set of all variables defined by its descendant input nodes. A PC $\p$ is smooth if for every sum node $n$, its children share the same scope: $\forall c_1, c_2 \in \ch(n)$, $\scope (c_1) = \scope (c_2)$. $\p$ is decomposable if for every product node $n$, its children have disjoint scopes: $\forall c_1, c_2 \in \ch(n)$ ($c_1 \neq c_2$), $\scope (c_1) \cap \scope (c_2) = \emptyset$.
\end{defn}

\section{Proofs}

This section provides proof of the theoretical results stated in the main paper.

\subsection{Interpreting the EM Algorithm of PCs}
\label{appx:proof-em-general-form}

This section provides the proof of \cref{prop:em-general-form}, which interprets the full-batch EM algorithm of PCs in a new context.

\begin{proof}[Proof of \cref{prop:em-general-form}]
    We begin by formalizing the latent-variable-model view of PCs. Given a PC $\p_{\params} (\X)$ parameterized by $\params$, we define a set of latent variables $\Z$ such that $\p_{\params} (\X) = \sum_{\z} \p_{\params} (\X, \Z = \z)$. Specifically, we associate a latent variable $Z_{n}$ with each sum node $n$ in the PC. We use $Z_{n} = i$ ($i \in \{1, \dots, \abs{\ch(n)}\}$) to denote that we ``select'' the $i$-th child node by zeroing out all the probabilities coming from all other child nodes:
        \begin{align*}
            \p_{n} (\x, \Z_n = i, \z_{\backslash n}) = \sum_{c \in \ch(n)} \exp(\phi_{n,c}) \cdot \p_{c} (\x, \Z_n = i, \z_{\backslash n}) \cdot \mathbbm{1} [c = c_i],
        \end{align*}
    \noindent where we define $c_i$ as the $i$-th child node of $n$, and $\Z_{\backslash n} := \Z \backslash Z_{n}$.

    We further show that $\p_{\params} (\X, \Z)$ is an exponential family distribution. To see this, it suffices to construct a set of $\abs{\params}$ sufficient statistics $S (\x, \z)$ such that for every $\x$ and $\z$, the likelihood can be expressed as:
        \begin{align*}
            \p_{\params} (\x, \z) = \exp \left ( \langle S (\x, \z), \params \rangle - A (\params) \right ),
        \end{align*}
    \noindent where $A (\params) = \log \sum_{\x, \z} \langle S (\x, \z), \params \rangle$ is the log partition function that normalizes the distribution. Note that $A (\params)$ is convex by definition.

    To construct $S (\x, \z)$, we first define the \emph{support} $\mathtt{supp} (n)$ of every node recursively as follows:
        \begin{align*}
            \mathtt{supp} (n) := \begin{cases}
                \{(\x, \z) : \p_{n} (\x) > 0\} & n \text{~is~an~input~node}, \\
                \bigcap_{c \in \ch(n)} \mathtt{supp} (c) & n \text{~is~a~product~node}, \\
                \bigcup_{c \in \ch(n)} \left ( \{(\x,\z) : z_{n} = c\} \cap \supp (c) \right ) & n \text{~is~a~sum~node},
            \end{cases}
        \end{align*}
    \noindent where $z_n = c$ means $z_n = i$ if $c$ is the $i$-th child of $n$.

    The sufficient statistics $S (\x, \z)$ can be defined using the support. Specifically, the sufficient statistics corresponding to the parameter $\phi_{n,c}$, denoted $S_{\phi_{n,c}} (\x, \z)$, can be represented as:
        \begin{align*}
            S_{\phi_{n,c}} (\x, \z) = \mathbbm{1} \Big [ (\x, \z) \in \mathtt{supp} (c) \text{~and~} z_{n} = c \Big ],
        \end{align*}
    \noindent where $\mathbbm{1} [\cdot]$ is the indicator function.

    Before proceeding, we define the Bregman divergence induced by a convex function $h$ as:
    \begin{align*}
        D_{h} (\y, \x) := h(\y) - h(\x) - \left \langle \frac{\partial h(\x)}{\partial \x}, \y - \x \right \rangle.
    \end{align*}

    The following part partially follows \citet{kunstner2021homeomorphic}. We plug in the exponential family distribution form of the PC into the definition of $Q_{\boldsymbol{\phi}} (\boldsymbol{\phi}')$:
    \begin{flalign*}
        && Q_{\boldsymbol{\phi}} (\boldsymbol{\phi}') & = \frac{1}{\abs{\data}} \sum_{\x \in \data} \sum_{\z} \p_{\boldsymbol{\phi}} (\z \given \x) \log \p_{\boldsymbol{\phi}'} (\x, \z), && \\
        && & = \frac{1}{\abs{\data}} \sum_{\x \in \data} \sum_{\z} \p_{\boldsymbol{\phi}} (\z \given \x) \left [ \left \langle S(\x, \z), \boldsymbol{\phi}' \right \rangle - A (\boldsymbol{\phi}') \right ], && \triangleright \text{Definition~of~} \p_{\boldsymbol{\phi}'} (\x, \z) \\
        && & = \frac{1}{\abs{\data}} \sum_{\x \in \data} \left \langle \expectation_{\p_{\boldsymbol{\phi}} (\z \given \x)} \left [ S(\x, \z) \right ], \boldsymbol{\phi}' \right \rangle - A(\boldsymbol{\phi}'). && \triangleright \text{Linearity~of~expectation}
    \end{flalign*}
    We then subtract both sides by $Q_{\boldsymbol{\phi}} (\boldsymbol{\phi})$, which is irrelevant to $\boldsymbol{\phi}'$:
    \begin{flalign}
        && Q_{\boldsymbol{\phi}} (\boldsymbol{\phi}') - Q_{\boldsymbol{\phi}} (\boldsymbol{\phi}) & = \frac{1}{\abs{\data}} \sum_{\x \in \data} \left \langle \expectation_{\z \sim \p_{\boldsymbol{\phi}} (\cdot \given \x)} \left [ S(\x, \z) \right ], \boldsymbol{\phi}' - \boldsymbol{\phi} \right \rangle - A(\boldsymbol{\phi}') + A(\boldsymbol{\phi}), && \nonumber \\
        && & = \frac{1}{\abs{\data}} \sum_{\x \in \data} \left \langle \expectation_{\z \sim \p_{\boldsymbol{\phi}} (\cdot \given \x)} \left [ S(\x, \z) \right ] - \frac{\partial A(\boldsymbol{\phi})}{\partial \boldsymbol{\phi}}, \boldsymbol{\phi}' - \boldsymbol{\phi} \right \rangle - \underbrace{\left ( A(\boldsymbol{\phi}') - A(\boldsymbol{\phi}) - \left \langle \frac{\partial A(\boldsymbol{\phi})}{\partial \boldsymbol{\phi}}, \boldsymbol{\phi}' - \boldsymbol{\phi} \right \rangle \right )}_{D_{A} (\boldsymbol{\phi}', \boldsymbol{\phi})}, \nonumber \\
        && & = \frac{1}{\abs{\data}} \sum_{\x \in \data} \left \langle \expectation_{\z \sim \p_{\boldsymbol{\phi}} (\cdot \given \x)} \left [ S(\x, \z) \right ] - \frac{\partial A(\boldsymbol{\phi})}{\partial \boldsymbol{\phi}}, \boldsymbol{\phi}' - \boldsymbol{\phi} \right \rangle - D_{A} (\boldsymbol{\phi}', \boldsymbol{\phi}). \label{eq:proof1-1}
    \end{flalign}
    We continue by simplifying the first term in the above expression. To do this, consider the gradient of $\LL (\boldsymbol{\phi})$ w.r.t. $\boldsymbol{\phi}$:
    \begin{align*}
        \frac{\partial \LL (\boldsymbol{\phi})}{\partial \boldsymbol{\phi}} & = \frac{1}{\abs{\data}} \sum_{\x \in \data} \frac{\partial \log \p_{\boldsymbol{\phi}} (\x)}{\partial \boldsymbol{\phi}}, \\
        & = \frac{1}{\abs{\data}} \sum_{\x \in \data} \frac{\partial \log \Big ( \sum_{\z} \exp \big ( \left \langle S(\x, \z), \boldsymbol{\phi} \right \rangle \big ) \Big )}{\partial \boldsymbol{\phi}} - \frac{\partial A(\boldsymbol{\phi})}{\partial \params}, \\
        & = \frac{1}{\abs{\data}} \sum_{\x \in \data} \sum_{\z} \frac{\exp \big ( \left \langle S(\x, \z), \boldsymbol{\phi} \right \rangle \big ) \cdot S(\x, \z)}{\sum_{\z'} \exp \left ( \left \langle S(\x, \z'), \boldsymbol{\phi} \right \rangle \right )} - \frac{\partial A(\boldsymbol{\phi})}{\partial \boldsymbol{\phi}}, \\
        & = \expectation_{\x \sim \data, \z \sim \p_{\boldsymbol{\phi}} (\cdot \given \x)} [S(\x, \z)] - \frac{\partial A(\boldsymbol{\phi})}{\partial \boldsymbol{\phi}}.
    \end{align*}
    Plug in \cref{eq:proof1-1}, we have
    \begin{align}
        Q_{\boldsymbol{\phi}} (\boldsymbol{\phi}') - Q_{\boldsymbol{\phi}} (\boldsymbol{\phi}) = \left \langle \frac{\partial \LL (\boldsymbol{\phi})}{\partial \boldsymbol{\phi}}, \boldsymbol{\phi}' - \boldsymbol{\phi} \right \rangle - D_{A} (\boldsymbol{\phi}', \boldsymbol{\phi}).
        \label{eq:proof1-2}
    \end{align}
    We proceed to demonstrate that $D_{A} (\boldsymbol{\phi}', \boldsymbol{\phi}) = D_{\mathrm{KL}} (\p_{\boldsymbol{\phi}} (\X, \Z) \,||\, \p_{\boldsymbol{\phi}'} (\X, \Z))$, where $D_{\mathrm{KL}} (p \,||\, q)$ is the KL divergence between distributions $p$ and $q$:
    \begin{flalign*}
        && D_{\mathrm{KL}} (\p_{\boldsymbol{\phi}} (\X, \Z) \,||\, \p_{\boldsymbol{\phi}'} (\x, \z)) & = \expectation_{\x, \z \sim \p_{\boldsymbol{\phi}}} \left [ \log \frac{\p_{\boldsymbol{\phi}} (\x, \z)}{\p_{\boldsymbol{\phi}'} (\x, \z)} \right ], \\
        && & = \expectation_{\x, \z \sim \p_{\boldsymbol{\phi}}} \left [ \left \langle S(\x, \z), \boldsymbol{\phi} - \boldsymbol{\phi}' \right \rangle \right ] + A(\boldsymbol{\phi}') - A(\boldsymbol{\phi}), \\
        && & = \left \langle \expectation_{\x, \z \sim \p_{\boldsymbol{\phi}}} \left [ S(\x, \z) \right ], \boldsymbol{\phi} - \boldsymbol{\phi}' \right \rangle + A(\boldsymbol{\phi}') - A(\boldsymbol{\phi}), \\
        && & = \left \langle \frac{\partial A(\boldsymbol{\phi})}{\partial \boldsymbol{\phi}}, \boldsymbol{\phi} - \boldsymbol{\phi}' \right \rangle + A(\boldsymbol{\phi}') - A(\boldsymbol{\phi}), && \triangleright \text{Since~} \frac{\partial A(\boldsymbol{\phi})}{\partial \boldsymbol{\phi}} = \expectation_{\x, \z \sim \p_{\boldsymbol{\phi}}} \left [ S(\x, \z) \right ] \\
        && & = D_{A} (\boldsymbol{\phi}', \boldsymbol{\phi}).
    \end{flalign*}
    Plug the result back to \cref{eq:proof1-2}, we conclude that $Q_{\params} (\params')$ and the following are equivalent up to a constant independent of $\params'$:
        \begin{align}
            \frac{1}{\abs{\data}} \sum_{\x \in \data} \left \langle \frac{\partial \log \p_{\params} (\x)}{\partial \params}, \params' \right \rangle - \mathtt{KL}_{\params} (\params').
            \label{eq:proof1-3}
        \end{align}
\end{proof}

We proceed to prove \cref{lem:key-terms}, which offers a practical way to compute the two key quantities in \cref{eq:proof1-3}.

\begin{proof}[Proof of \cref{lem:key-terms}]
    Recall from our definition that $\p_{\params} (\x) := \tilde{\p}_{\params} (\x) / Z(\params)$. We start by proving a key result: for each parameter $\phi_{n,c}$, the partition function
        \begin{align}
            Z (\params) = \mathtt{TD} (n) \cdot \exp (\phi_{n,c}) + C,
            \label{eq:proof2-1}
        \end{align}
    \noindent where $C$ is independent of $\phi_{n,c}$. Note that by definition $Z (\params)$ is computed by the same feedforward pass shown in \cref{eq:pc-fwd}, with the only difference that the partition function is set to $1$ for input nodes. Specifically, denote $Z_{n} (\params)$ as the partition function of node $n$, we have
        \begin{align*}
            Z_{n} (\params) = \begin{cases}
                1 & n \text{~is~an~input~node}, \\
                \prod_{c \in \ch(n)} Z_{c} (\params) & n \text{~is~a~product~node}, \\
                \sum_{c \in \ch(n)} \exp (\phi_{n,c}) \cdot Z_{c} (\params) & n \text{~is~a~sum~node}.
            \end{cases}
        \end{align*}
    Define $\mathtt{TD}_{m} (n)$ as the TD-prob of node $n$ for the PC rooted at $m$ (assume $n$ is a descendant node of $m$). We prove \cref{eq:proof2-1} by induction over $m$ in $Z_{m} (\phi)$.

    In the base case where $m = n$, we have that
        \begin{flalign*}
            && Z_{m} (\params) & = Z_{n} (\params) = \sum_{c' \in \ch(n)} \exp (\phi_{n,c'}) \cdot Z_{c'} (\params), \\
            && & = \sum_{c' \in \ch(n)} \exp (\phi_{n,c'}), && \triangleright \text{Since~we~assume~} \forall c, Z_{c} (\params) = 1 \\
            && & = \exp (\phi_{n,c}) + \sum_{c' \in \ch(n), c' \neq c} \exp (\phi_{n,c'}), \\
            && & = \mathtt{TD}_{m} (n) \cdot \exp (\phi_{n,c}) + \sum_{c' \in \ch(n), c' \neq c} \exp (\phi_{n,c'}), && \triangleright \text{Since~} \forall c, \mathtt{TD}_{c} (c) = 1 \\
            && & = \mathtt{TD}_{m} (n) \cdot \exp (\phi_{n,c}) + C.
        \end{flalign*}
    Next, assume $m$ is a sum node and \cref{eq:proof2-1} holds for all its children. That is,
        \begin{align*}
            \forall b \in \ch(m), \quad Z_{b} (\params) = \mathtt{TD}_{b} (n) \cdot \exp (\phi_{n,c}) + C.
        \end{align*}
    We proceed by plugging in the definition of $Z_{m} (\params)$:
        \begin{align}
            Z_{m} (\params) & = \sum_{b \in \ch(m)} \exp (\phi_{m,b}) \cdot Z_{b} (\params), \nonumber \\
            & = \sum_{b \in \ch(m)} \exp (\phi_{m,b}) \cdot \mathtt{TD}_{b} (n) \cdot \exp (\phi_{n,c}) + C.
            \label{eq:proof2-2}
        \end{align}
    Denote $\calA \subseteq \ch (m)$ as the set of child nodes that are ancestors of $n$, and $\calB = \ch(m) \backslash \calA$ is its complement. From the definition of TD-probs, we have
        \begin{flalign*}
            && \mathtt{TD}_{m} (n) & = \sum_{b \in \calA} \mathtt{TD}_{m} (b) \cdot \mathtt{TD}_{b} (n), \\ 
            && & = \sum_{b \in \calA} \exp (\phi_{m,b}) \cdot \mathtt{TD}_{b} (n). && \triangleright \text{Since~} \mathtt{TD}_{m} (b) = \exp (\phi_{m, b})
        \end{flalign*}
    Plug in \cref{eq:proof2-2}, we conclude that
        \begin{flalign*}
            && Z_{m} (\params) & = \sum_{b \in \calA} \exp (\phi_{m,b}) \cdot \mathtt{TD}_{b} (n) \cdot \exp (\phi_{n,c}) + \sum_{b \in \calB} \exp (\phi_{m,b}) \cdot \mathtt{TD}_{b} (n) \cdot \exp (\phi_{n,c}) + C, \\
            && & = \mathtt{TD}_{m} (n) \cdot \exp (\phi_{n,c}) + \sum_{b \in \calB} \exp (\phi_{m,b}) \cdot \mathtt{TD}_{b} (n) \cdot \exp (\phi_{n,c}) + C, && \\
            && & = \mathtt{TD}_{m} (n) \cdot \exp (\phi_{n,c}) + C'.
        \end{flalign*}
    Finally, if $m$ is a product node such that \cref{eq:proof2-1} holds for all its children, we have that 
        \begin{align*}
            Z_{m} (\params) & = \prod_{b \in \ch(m)} Z_{b} (\params).
        \end{align*}
    Since $m$ is decomposable (cf. Def.~\ref{defn:sm-dec}), there is at most one $b \in \ch(m)$ that is an ancestor of $n$ (otherwise multiple child nodes contain the variable scope of $n$). Denote that child node as $\hat{b}$, we further simplify the above equation to 
        \begin{align}
            Z_{m} (\params) = Z_{\hat{b}} (\params) = \mathtt{TD}_{\hat{b}} (n) \cdot \exp (\phi_{n,c}) + C
            \label{eq:proof2-3}
        \end{align}
    since all other terms are independent of $\phi_{n,c}$ and are assumed to be $1$. According to the definition of TD-probs, we have
        \begin{align}
            \forall b \in \ch(m), \quad \mathtt{TD}_{b} (n) = \mathtt{TD}_{m} (n).
            \label{eq:proof2-3-1}
        \end{align}
    Plug this into \cref{eq:proof2-3} gives the desired result:
        \begin{align*}
            Z_{m} (\params) = \mathtt{TD}_{m} (n) \cdot \exp (\phi_{n,c}) + C.
        \end{align*}
    This completes the proof of \cref{eq:proof1-1}.

    We continue on proving the first equality in \cref{lem:key-terms}:
        \begin{align*}
            \frac{\partial \log \p_{\params} (\x)}{\partial \params} & = \frac{\partial \log \hat{\p}_{\params} (\x)}{\partial \params} - \frac{\partial \log Z (\params)}{\partial \params}, \\
            & = \frac{\partial \log \hat{\p}_{\params} (\x)}{\partial \params} - \frac{1}{Z (\params)} \cdot \frac{\partial Z (\params)}{\partial \params}, \\
            & = \frac{\partial \log \hat{\p}_{\params} (\x)}{\partial \params} - \frac{\partial Z (\params)}{\partial \params}.
        \end{align*}
    According to \cref{eq:proof2-1}, we can simplify the derivative of $Z (\params)$ with respect to $\phi_{n,c}$ as $\mathtt{TD} (n) \cdot \exp (\phi_{n,c}) = \mathtt{TD} (\phi_{n,c})$, where the last equality follows from \cref{defn:td-probs}. Therefore, we conclude that
        \begin{align*}
            \frac{\partial \log \p_{\params} (\x)}{\partial \params} = \frac{\partial \log \hat{\p}_{\params} (\x)}{\partial \params} - \mathtt{TD} (\params).
        \end{align*}
    We move on to the second equality in \cref{lem:key-terms}. According to \citet{vergari2021compositional}, $\mathtt{KL}_{\params} (\params')$ can be computed recursively as follows (define $\mathtt{KL}_{\params}^{n} (\params')$ as the KLD \wrt $n$): 
        \begin{align}
            \mathtt{KL}_{\params}^{n} (\params') = \begin{cases}
                0 & n \text{~is~an~input~node}, \\
                \sum_{c \in \ch(n)} \mathtt{KL}_{\params}^{c} (\params') & n \text{~is~a~product~node}, \\
                \sum_{c \in \ch(n)} \exp(\phi_{n,c}) \big ( \phi_{n,c} - \phi'_{n,c} \big ) + \exp(\phi_{n,c}) \cdot \mathtt{KL}_{\params}^{c} (\params') & n \text{~is~a~sum~node}.
            \end{cases}
            \label{eq:proof2-4}
        \end{align}
    We want to show that for each $m$ that is an ancestor of $n$, the following holds:
        \begin{align}
            \mathtt{KL}_{\params}^{m} (\params') = - \mathtt{TD}_{m} (n) \cdot \exp (\phi_{n,c}) \cdot \phi'_{n,c} + C,
            \label{eq:proof2-5}
        \end{align}
    \noindent where $C$ is independent of $\phi'_{n,c}$. We can use the exact same induction procedure that is used to prove \cref{eq:proof2-1}. For all ancestor sum nodes $m$ of $n$, the first term in \cref{eq:proof2-4} (the last row among the three cases) is always independent of $\phi'_{n,c}$, and hence the recursive definition resembles that of $Z_{m} (\params)$. Specifically, for all ancestor nodes of $n$, \cref{eq:proof2-5} simplifies to 
        \begin{align*}
            \mathtt{KL}_{\params}^{n} (\params') = \begin{cases}
                0 & n \text{~is~an~input~node}, \\
                \sum_{c \in \ch (n)} \mathtt{KL}_{\params}^{c} (\params') & n \text{~is~a~product~node}, \\
                \sum_{c \in \ch(n)} \exp (\phi_{n,c}) \cdot \mathtt{KL}_{\params}^{c} (\params') & n \text{~is~a~sum~node}.
            \end{cases}
        \end{align*}
    The key difference with $Z_{n} (\params)$ is the definition of product nodes. Therefore, following the same induction proof of \cref{eq:proof2-1}, we only need to re-derive the case where $m$ is a product node such that \cref{eq:proof2-5} holds for all its children. 
    
    Since the PC is decomposable, there is only one child node $b \in \ch(m)$ that is an ancestor of $n$. Therefore, $\forall c \in \ch(m), c \neq b$, $\mathtt{KL}_{\params}^{c} (\params')$ is independent of $\phi' (n,c)$. Hence, we have
        \begin{flalign*}
            && \mathtt{KL}_{\params}^{m} (\params') & = - \mathtt{TD}_{b} (n) \cdot \exp (\phi_{n,c}) \cdot \phi'_{n,c} + C, \\
            && & = - \mathtt{TD}_{m} (n) \cdot \exp (\phi_{n,c}) \cdot \phi'_{n,c} + C. && \triangleright \text{According~to~Eq.~(\ref{eq:proof2-3-1})}
        \end{flalign*}

    Writing \cref{eq:proof2-5} in a vectorized form for every $\phi'_{n,c}$ leads to our final result:
        \begin{align*}
            \mathtt{KL}_{\params} (\params') = - \langle \mathtt{TD} (\params), \params' \rangle + C.
        \end{align*}
\end{proof}

\subsection{Derivation of the Full-Batch and Mini-Batch EM}
\label{appx:proof-constraint-em}

\paragraph{Full-Batch EM.}
Define $\calS$ as the set of all sum nodes in the PC, the constrained optimization problem is 
    \begin{gather*}
        \maximize_{\params'} \left \langle \frac{1}{\abs{\data}} \sum_{\x \in \data} \frac{\partial \log \tilde{\p}_{\params} (\x)}{\partial \params}, \params' \right \rangle, \\
        \text{s.t.~} \forall n \in \calS, \; \sum_{c \in \ch(n)} \exp (\phi'_{n,c}) = 1.
    \end{gather*}
To incorporate the constraints, we use the method of Lagrange multipliers. The Lagrangian for this problem is
    \begin{align*}
        \calL (\params', \{\lambda_{n}\}_{n \in \calS}) = \left \langle \frac{1}{\abs{\data}} \sum_{\x \in \data} \frac{\partial \log \tilde{\p}_{\params} (\x)}{\partial \params}, \params' \right \rangle + \sum_{n \in \calS} \lambda_{n} \cdot \left ( 1 - \sum_{c \in \ch(n)} \exp (\phi'_{n,c}) \right ),
    \end{align*}
\noindent where the Lagrange multipliers $\{\lambda_{n}\}_{n \in \calS}$ enforce the constraints.

To minimize the Lagrangian \wrt $\params'$, we take the partial derivative of $\calL (\params', \{\lambda_{n}\}_{n \in \calS})$ \wrt each $\phi'_{n,c}$ and set it to 0:
    \begin{align*}
        \frac{\partial \calL (\params', \{\lambda_{n}\}_{n \in \calS})}{\partial \phi'_{n,c}} = \mathtt{F}^{\data}_{\params} (n, c) - \lambda_{n} \exp (\phi'_{n,c}) = 0,
    \end{align*}
\noindent where $\mathtt{F}^{\data}_{\params} (n, c)$ is defined in \cref{sec:full-batch-em}. Simplifying this equation gives:
    \begin{align*}
        \phi'_{n,c} = \log \mathtt{F}^{\data}_{\params} (n, c) - \log Z,
    \end{align*}
\noindent where $Z = \sum_{c' \in \ch(n)} \mathtt{F}_{\params}^{\data} (n, c')$.

\paragraph{Mini-Batch EM.}
Similar to the full-batch case, according to \cref{sec:mini-batch-em}, the constrained optimization problem is
    \begin{gather*}
        \maximize_{\params'} \left \langle \frac{1}{\abs{\data}} \sum_{\x \in \data} \frac{\partial \log \tilde{\p}_{\params} (\x)}{\partial \params} + (\gamma - 1) \cdot \mathtt{TD} (\params), \params' \right \rangle, \\
        \text{s.t.~} \forall n \in \calS, \; \sum_{c \in \ch(n)} \exp (\phi'_{n,c}) = 1.
    \end{gather*}
Following the full-batch case, the Lagrangian is given by
    \begin{align*}
        \calL (\params', \{\lambda_{n}\}_{n \in \calS}) = \left \langle \frac{1}{\abs{\data}} \sum_{\x \in \data} \frac{\partial \log \tilde{\p}_{\params} (\x)}{\partial \params} + (\gamma - 1) \cdot \mathtt{TD} (\params), \params' \right \rangle + \sum_{n \in \calS} \lambda_{n} \cdot \left ( 1 - \sum_{c \in \ch(n)} \exp (\phi'_{n,c}) \right ).
    \end{align*}
To minimize the Lagrangian with respect to $\params'$, we compute the partial derivative of $\calL (\params', \{\lambda_{n}\}_{n \in \calS})$ \wrt each $\phi'_{n,c}$ and set it equal to zero:
    \begin{align*}
        \frac{\partial \calL (\params', \{\lambda_{n}\}_{n \in \calS})}{\partial \phi'_{n,c}} = \mathtt{F}^{\data}_{\params} (n, c) + (\gamma - 1) \cdot \mathtt{TD} (\phi'_{n,c}) - \lambda_{n} \exp (\phi'_{n,c}) = 0.
    \end{align*}
Using the definition $\mathtt{TD} (\phi'_{n,c}) = \mathtt{TD}_{\params} (n) \cdot \exp (\phi_{n,c})$, the solution is given by
    \begin{align*}
        \phi'_{n,c} = \log \Big ( \mathtt{TD}_{\params} (n) \cdot \exp (\phi_{n,c}) + \alpha \cdot \mathtt{F}^{\data}_{\params} (n, c) \Big ) - \log Z,
    \end{align*}
\noindent where $\alpha := 1 / (\gamma - 1)$ and $Z = \sum_{c \in \ch(n)} \mathtt{TD}_{\params} (n) \cdot \exp (\phi_{n,c}) + \alpha \cdot \mathtt{F}^{\data}_{\params} (n, c)$.

\subsection{Decomposition of Parameter Flows}
\label{appx:proof-normed-flows}

In this section, we show that $\sum_{c \in \ch(n)} \hat{\mathtt{F}}_{\params}^{\x} (n, c) = 1$, where $n$ is a sum node. We start from the definition of $\hat{\mathtt{F}}_{\params}^{\x} (n, c)$:
    \begin{flalign*}
        && \hat{\mathtt{F}}_{\params}^{\x} (n, c) & = \frac{\partial \log \tilde{\p}_{\params}^{n} (\x)}{\partial \phi_{n,c}} = \frac{1}{\tilde{\p}_{\params}^{n} (\x)} \cdot \frac{\partial \tilde{\p}_{\params}^{n} (\x)}{\partial \phi_{n,c}}, \\
        && & = \frac{\theta_{n,c}}{\tilde{\p}_{\params}^{n} (\x)} \cdot \frac{\partial \tilde{\p}_{\params}^{n} (\x)}{\partial \theta_{n,c}}, && \triangleright \text{By~definition~} \theta_{n,c} = \exp (\phi_{n,c}) \\
        && & = \frac{\theta_{n,c} \cdot \tilde{\p}_{\params}^{c} (\x)}{\tilde{\p}_{\params}^{n} (\x)}.
    \end{flalign*}
Now we have
    \begin{align*}
        \sum_{c \in \ch(n)} \hat{\mathtt{F}}_{\params}^{\x} (n, c) = \sum_{c \in \ch(n)} \frac{\theta_{n,c} \cdot \tilde{\p}_{\params}^{c} (\x)}{\tilde{\p}_{\params}^{n} (\x)} = 1.
    \end{align*}

\subsection{Computing TD-Prob Using Auto-Differentiation}
\label{appx:proof-tdp-ad}

In this section, we prove that $\mathtt{TD} (n)$ (and thus also $\mathtt{TD} (\phi_{n,c})$) can be computed by differentiating $Z_{n_r} (\params)$, where $n_r$ is the root node. Note that the definition of $Z_{n} (\params)$ follows \cref{appx:proof-em-general-form}.

We proceed with the proof by induction. First, as a base case, we have that
    \begin{align*}
        \frac{\partial Z_{n_r} (\params)}{\partial Z_{n_r} (\params)} = 1 = \mathtt{TD} (n_r).
    \end{align*}
Next, assume that for a sum/input node $n$, for all its parents $m \in \pa (n)$ (which are product nodes according to Def.~\ref{defn:pc}) we satisfy that
    \begin{align*}
        \mathtt{TD} (m) = \frac{\partial Z_{n_r} (\params)}{\partial Z_{m} (\params)}.
    \end{align*}
We proceed to derive $\partial Z_{n_r} (\params) / \partial Z_{n} (\params)$:
    \begin{flalign*}
        && \frac{\partial Z_{n_r} (\params)}{\partial Z_{n} (\params)} & = \sum_{m \in \pa(n)} \frac{\partial Z_{n_r} (\params)}{\partial Z_{m} (\params)} \cdot \frac{\partial Z_{m} (\params)}{\partial Z_{n} (\params)}, \\
        && & = \sum_{m \in \pa(n)} \mathtt{TD} (m) \cdot \frac{\partial Z_{m} (\params)}{\partial Z_{n} (\params)}, \\
        && & = \sum_{m \in \pa(n)} \mathtt{TD} (m). && \triangleright \text{By~definition~} \frac{\partial Z_{m} (\params)}{\partial Z_{n} (\params)} = 1
    \end{flalign*}
The final case is for a product node $n$, assuming all its parents satisfy the requirement, \ie $\forall m \in \pa (n), \mathtt{TD} (m) = \partial Z_{n_r} (\params) / \partial Z_{m} (\params)$. We can simplify the gradient with respect to $Z_{n} (\params)$ by
    \begin{flalign*}
        && \frac{\partial Z_{n_r} (\params)}{\partial Z_{n} (\params)} & = \sum_{m \in \pa(n)} \frac{\partial Z_{n_r} (\params)}{\partial Z_{m} (\params)} \cdot \frac{\partial Z_{m} (\params)}{\partial Z_{n} (\params)}, \\
        && & = \sum_{m \in \pa(n)} \mathtt{TD} (m) \cdot \frac{\partial Z_{m} (\params)}{\partial Z_{n} (\params)}, \\
        && & = \sum_{m \in \pa(n)} \mathtt{TD} (m) \cdot \theta_{m,n}. && \triangleright \text{By~definition~} \frac{\partial Z_{m} (\params)}{\partial Z_{n} (\params)} = \theta_{m,n}
    \end{flalign*}

\section{Global Parameter Renormalization of PCs}
\label{appx:global-renorm}

In this section, we propose a simple renormalization algorithm that takes an unnormalized PC $\p_{\params} (\X)$ (\ie its partition function does not equal $1$) with parameters $\params$ and returns a new set of parameters $\params'$ such that for each node $n$ in the PC
    \begin{align*}
        \forall \x, \; \tilde{\p}_{\params'}^{n} (\x) = \frac{1}{Z_{n} (\params)} \cdot \tilde{\p}_{\params}^{n} (\x),
    \end{align*}
\noindent where $Z (\params) := \sum_{\x} \tilde{\p}_{\params}^{n} (\x)$ is the partition function of $\tilde{\p}_{\params}^{n}$. 


\paragraph{Analysis.}
We begin by proving the correctness of the algorithm. Specifically, we show by induction that $\tilde{\p}_{\params'}^{n} (\x) = \tilde{\p}_{\params}^{n} (\x) / Z_{\params} (n)$ for every $n$ and $\x$. In the base case, all input nodes satisfy the equation since they are assumed to be normalized. Next, given a product node $n$, assume the distributions encoded by all its children $c$ satisfy 
    \begin{align}
        \forall c \in \ch(n), \; \tilde{\p}_{\params'}^{c} (\x) = \tilde{\p}_{\params}^{c} (\x) / Z_{\params} (c).
        \label{eq:proof3-1}
    \end{align}
Then by definition, $\p_{\params'}^{n} (\x)$ can be written as:
    \begin{align*}
        \tilde{\p}_{\params'}^{n} (\x) & = \prod_{c \in \ch(n)} \tilde{\p}_{\params'}^{c} (\x) = \prod_{c \in \ch(n)} \tilde{\p}_{\params}^{c} (\x) / Z_{\params} (c), \\
        & = \frac{\prod_{c \in \ch(n)} \tilde{\p}_{\params}^{c} (\x)}{\prod_{c \in \ch(n)} Z_{\params} (c)}, \\
        & = \frac{\tilde{\p}_{\params}^{n} (\x)}{Z_{\params} (n)}.
    \end{align*}
Finally, consider a sum node $n$ whose children satisfy \cref{eq:proof3-1}. We simplify $\tilde{\p}_{\params'}^{n} (\x)$ in the following:
    \begin{flalign*}
        && \tilde{\p}_{\params'}^{n} (\x) & = \sum_{c \in \ch(n)} \theta'_{n,c} \cdot \tilde{\p}_{\params'}^{c} (\x), \\
        && & = \sum_{c \in \ch(n)} \frac{\theta_{n,c} \cdot Z_{\params} (c)}{Z_{\params} (n)} \cdot \tilde{\p}_{\params'}^{c} (\x), && \triangleright \text{According~to~Eq.~(\ref{eq:renorm})} \\
        && & = \sum_{c \in \ch(n)} \frac{\theta_{n,c} \cdot \cancel{Z_{\params} (c)}}{Z_{\params} (n)} \cdot \frac{\tilde{\p}_{\params}^{c} (\x)}{\cancel{Z_{\params} (c)}}, && \triangleright \text{By~induction} \\
        && & = \frac{\sum_{c \in \ch(n)} \theta_{n,c} \cdot \tilde{\p}_{\params}^{c} (\x)}{Z_{\params} (n)}, \\
        && & = \tilde{\p}_{\params}^{n} (\x) / Z_{\params} (n).
    \end{flalign*}

    We proceed to show an interesting property of the proposed global renormalization.

    \begin{lem}
        Given a PC $\p_{\params} (\X)$. Denote $\params'$ as the parameters returned by the global renormalization algorithm. Then, for every sum edge $(n, c)$, we have
            \begin{align*}
                \forall \x, \; \frac{\partial \log \tilde{\p}_{\params'} (\x)}{\partial \phi'_{n,c}} = \frac{\partial \log \tilde{\p}_{\params} (\x)}{\partial \phi_{n,c}}.
            \end{align*}
    \end{lem}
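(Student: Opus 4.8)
The plan is to reduce the statement to the chain-rule decomposition of the parameter flow introduced in \cref{sec:mini-batch-em}: for any PC one can write $\partial \log \p_{\params}(\x) / \partial \phi_{n,c} = \mathtt{F}_{\params}^{\x}(n) \cdot \hat{\mathtt{F}}_{\params}^{\x}(n,c)$, where $\mathtt{F}_{\params}^{\x}(n) = \partial \log \p_{\params}(\x) / \partial \log \p_{\params}^{n}(\x)$ is the importance of node $n$ and $\hat{\mathtt{F}}_{\params}^{\x}(n,c) = \partial \log \p_{\params}^{n}(\x) / \partial \phi_{n,c}$ the relative contribution of child $c$ (here, as throughout this section, $\p_{\params}$ and $\p_{\params'}$ denote the feedforward outputs of the original and renormalized circuits). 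The target identity then follows once I show that each of the two factors is separately invariant under the renormalization map, i.e.\ $\hat{\mathtt{F}}_{\params'}^{\x}(n,c) = \hat{\mathtt{F}}_{\params}^{\x}(n,c)$ and $\mathtt{F}_{\params'}^{\x}(n) = \mathtt{F}_{\params}^{\x}(n)$ for every node $n$. Two facts established earlier in this section will be used repeatedly: the update rule $\theta'_{n,c} = \theta_{n,c}\, Z_{\params}(c)/Z_{\params}(n)$ (\cref{eq:renorm}), and the correctness identity $\p_{\params'}^{n}(\x) = \p_{\params}^{n}(\x)/Z_{\params}(n)$ just proved above.

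For the child factor this is immediate: \cref{appx:proof-normed-flows} gives $\hat{\mathtt{F}}_{\params}^{\x}(n,c) = \theta_{n,c}\, \p_{\params}^{c}(\x)/\p_{\params}^{n}(\x)$, so substituting $\theta'_{n,c}$, $\p_{\params'}^{c}(\x)$ and $\p_{\params'}^{n}(\x)$ into this formula makes all the $Z_{\params}(\cdot)$ factors cancel and yields $\hat{\mathtt{F}}_{\params'}^{\x}(n,c) = \theta_{n,c}\, \p_{\params}^{c}(\x)/\p_{\params}^{n}(\x) = \hat{\mathtt{F}}_{\params}^{\x}(n,c)$.

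The node factor $\mathtt{F}_{\params}^{\x}(n)$ is the crux. First I would derive its top-down recursion by the chain rule over the parents of $n$, using the alternating sum/product structure: $\mathtt{F}_{\params}^{\x}(n_{\mathrm{r}}) = 1$; for a sum node $n$ each parent $m$ is a product node with $\partial \log \p_{\params}^{m}(\x)/\partial \log \p_{\params}^{n}(\x) = 1$, so $\mathtt{F}_{\params}^{\x}(n) = \sum_{m \in \pa(n)} \mathtt{F}_{\params}^{\x}(m)$; for a product node $n$ each parent $m$ is a sum node with $\partial \log \p_{\params}^{m}(\x)/\partial \log \p_{\params}^{n}(\x) = \theta_{m,n}\, \p_{\params}^{n}(\x)/\p_{\params}^{m}(\x)$, so $\mathtt{F}_{\params}^{\x}(n) = \sum_{m \in \pa(n)} \mathtt{F}_{\params}^{\x}(m) \cdot \theta_{m,n}\, \p_{\params}^{n}(\x)/\p_{\params}^{m}(\x)$. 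I would then prove $\mathtt{F}_{\params'}^{\x}(n) = \mathtt{F}_{\params}^{\x}(n)$ by induction on nodes ordered top-down from the root. The base case is trivial. For a sum node the recursion refers to neither parameters nor node values, so the step is immediate from the induction hypothesis. For a product node, the per-parent coefficient $\theta'_{m,n}\, \p_{\params'}^{n}(\x)/\p_{\params'}^{m}(\x)$ collapses to $\theta_{m,n}\, \p_{\params}^{n}(\x)/\p_{\params}^{m}(\x)$ by exactly the same $Z_{\params}(\cdot)$ cancellation as in the previous paragraph --- this is precisely where decomposability enters, since it is what makes the compensation $Z_{\params}(m) = \prod_{b \in \ch(m)} Z_{\params}(b)$ built into \cref{eq:renorm} valid --- and the step again follows. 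Hence $\mathtt{F}_{\params'}^{\x}(n) = \mathtt{F}_{\params}^{\x}(n)$ for all $n$.

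Multiplying the two invariances gives $\partial \log \p_{\params'}(\x)/\partial \phi'_{n,c} = \mathtt{F}_{\params'}^{\x}(n)\, \hat{\mathtt{F}}_{\params'}^{\x}(n,c) = \mathtt{F}_{\params}^{\x}(n)\, \hat{\mathtt{F}}_{\params}^{\x}(n,c) = \partial \log \p_{\params}(\x)/\partial \phi_{n,c}$, which is the claim. I expect the only real obstacle to be stating the top-down recursion for $\mathtt{F}_{\params}^{\x}(n)$ precisely and carrying out the product-node inductive step; the child-factor argument and the sum-node case are pure bookkeeping with the $Z_{\params}(\cdot)$ factors.
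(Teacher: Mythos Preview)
Your proposal is correct and takes essentially the same approach as the paper: both arguments hinge on the fact that the local chain-rule factor $\partial \log \p_{\bullet}^{m}(\x)/\partial \log \p_{\bullet}^{n}(\x)$ on each edge is preserved by renormalization (the $Z_{\params}(\cdot)$ cancellation you describe), and then assemble these edge-wise invariances into the full gradient via the chain rule. Your write-up is in fact more explicit than the paper's --- the paper only proves the edge-wise invariance for product and sum parents and leaves the top-down induction that combines them into $\mathtt{F}_{\params}^{\x}(n)$ implicit, whereas you spell out that recursion and carry the induction through.
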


    \begin{proof}
        We begin by showing that for each node $n$ is one of its children, the following holds:
            \begin{align*}
                \forall \x, \; \frac{\partial \log \tilde{\p}_{\params'}^{n} (\x)}{\partial \log \tilde{\p}_{\params'}^{c} (\x)} = \frac{\partial \log \tilde{\p}_{\params}^{n} (\x)}{\partial \log \tilde{\p}_{\params}^{c} (\x)}.
            \end{align*}
        If $n$ is a product node, both the left-hand side and the right-hand side equal $1$. For example, consider the left-hand side. According to the definition, we have 
            \begin{align*}
                \log \tilde{\p}_{\params'}^{n} (\x) = \sum_{c \in \ch(n)} \log \tilde{\p}_{\params'}^{c} (\x).
            \end{align*}
        Hence, its derivative \wrt $\log \tilde{\p}_{\params'}^{c} (\x)$ is $1$.

        If $n$ is a sum node, then for each $\x$, we have
            \begin{align*}
                \frac{\partial \log \tilde{\p}_{\params'}^{n} (\x)}{\partial \log \tilde{\p}_{\params'}^{c} (\x)} & = \frac{\tilde{\p}_{\params'}^{c} (\x)}{\tilde{\p}_{\params'}^{n} (\x)} \cdot \frac{\partial \tilde{\p}_{\params'}^{n} (\x)}{\partial \tilde{\p}_{\params'}^{c} (\x)}, \\
                & = \frac{\tilde{\p}_{\params'}^{c} (\x)}{\tilde{\p}_{\params'}^{n} (\x)} \cdot \theta'_{n,c}, \\
                & = \frac{\tilde{\p}_{\params}^{c} (\x) / Z_{\params} (c)}{\tilde{\p}_{\params}^{n} (\x) / Z_{\params} (n)} \cdot \theta'_{n,c}, \\
                & = \frac{\tilde{\p}_{\params}^{c} (\x) / \cancel{Z_{\params} (c)}}{\tilde{\p}_{\params}^{n} (\x) / \cancel{Z_{\params} (n)}} \cdot \frac{\theta_{n,c} \cdot \cancel{Z_{\params} (c)}}{\cancel{Z_{\params} (n)}}, \\
                & = \frac{\tilde{\p}_{\params}^{c} (\x)}{\tilde{\p}_{\params}^{n} (\x)} \cdot \theta_{n,c}, \\
                & = \frac{\tilde{\p}_{\params}^{c} (\x)}{\tilde{\p}_{\params}^{n} (\x)} \cdot \frac{\partial \tilde{\p}_{\params}^{n} (\x)}{\partial \tilde{\p}_{\params}^{c} (\x)}, \\
                & = \frac{\partial \log \tilde{\p}_{\params}^{c} (\x)}{\partial \log \tilde{\p}_{\params}^{n} (\x)}.
            \end{align*}
    \end{proof}

\section{Additional Experimental Details}
\label{appx:exps}

\subsection{Details about the Datasets and the PC Models}
\label{appx:datasets}
\paragraph{ImageNet32 and The Corresponding PCs.}
For ImageNet32, we partition every $32\times32$ image (three color channels) into four $16\times16$ patches and treat these as individual data samples. There are in total $16 \times 16 \times 3 = 768$ categorical variables in the PC.

We preprocess the data in color space with a lossy transformation, YCoCg, and its scaled, reversible variant, YCoCg-R, proposed by \citet{malvar2003ycocg}. Specifically, in YCoCg transformation, given a pixel with RGB values $(R, G, B)$, we first normalize them to the range $[0, 1]$ by
    \begin{align*}
        r = R / 255, \; g = G / 255, \; b = B / 255.
    \end{align*}
We then apply the following linear transformation:
    \begin{align*}
        co  = r - b, \; tmp = b + co/2, \; cg  = g - tmp, \; y   = tmp*2 + cg + 1,
    \end{align*}
\noindent where $y$, $co$, and $cg$ are all in the range $[-1, 1]$. Finally, we quantize the interval $[-1, 1]$ into $256$ bins uniformly and convert $y$, $co$, and $cg$ to their quantized version $Y$, $Co$, and $Cg$, respectively. Note that $Y$, $Co$, and $Cg$ are all categorical variables with $256$ categories.

And the other transformation, YCoCg-R, maps 8-bit integer RGB values to YCoCg values without information loss. Similarly, the forward transformation is given by:
\begin{align*} 
co = r - b, \;  tmp = b + co/2, \; cg  = g - tmp, \; y = tmp + cg / 2,
\end{align*}
\noindent where $y$, $co$, and $cg$ are also in the range $[-1, 1]$. The resulting integer channels are treated as categorical variables with $512$ categories.

We train several deep PC architectures on the ImageNet32 dataset \cite{deng2009imagenet}. These include Hidden Chow-Liu Trees (HCLT) \citep{liu2021tractable} with hidden size 512 and 1024, and Partitioned Data HCLTs (PDHCLTs) with 256 and 512 latents.\footnote{The PDHCLT structure is described in \cref{appx:pdhclt}.} The PDHCLT models are configured to partition the input data, which has a shape of (3, 16, 16). They use a maximum of 8 connections between product blocks. Please refer to 

\paragraph{WikiText and The Corresponding PCs.}
We also extend our empirical evaluation to language modeling using the WikiText-103 dataset by \citet{merity2017pointer}. The raw text data is preprocessed into a format suitable for sequence modeling. Specifically, we firstly tokenize the entire corpus using the standard GPT-2 tokenizer by \cite{GPT2}. All tokenized documents are then concatenated into a single continuous stream of tokens. Finally, this stream is partitioned into non-overlapping sequences of a fixed length of 128 tokens, with any remaining tokens at the end discarded to ensure uniformity across samples. On this preprocessed data, we trained Hidden Markov Models (HMMs), with 256, 512, and 1024 hidden states, and Monarch HMM with a size of 1024 \citep{zhang2025scaling}.

\paragraph{Biobank dataset and The Corresponding PCs.}
For our bioinformatics experiments, we use genetic data sourced from the UK Biobank (UKBB) dataset \citep{bioBank}. This specific version focuses on a single Linkage Disequilibrium (LD) block located on chromosome 6. The authors of this dataset remove SNPs with more than 1\% missingness and those that deviate significantly from Hardy-Weinberg Equilibrium ($1 \times 10^{-7}$ significance). Further, only individuals with no genetic relatedness to other individuals are considered. We have gotten the approval from the UK BioBank to access this dataset.

\subsection{Details about the Optimizers}
\label{appx:optimizers}
\paragraph{Full-Batch EM.}
The full-batch EM implementation follows prior work (\eg \citet{choi2021group,peharz2020einsum}).

\paragraph{An\textbf{em}one.}
For notation simplicity, we define $\alpha = \eta / (\eta - 1)$. Therefore, we can rewrite \cref{eq:ours-mini-em} equivalently as
    \begin{align}
        \theta'_{n,c} = \left ( (1 - \alpha) \cdot \mathtt{TD}_{\params} (n) \cdot \theta_{n,c} + \alpha \cdot \mathtt{F}_{\params}^{\data} (n, c) \right ) / Z,
        \label{eq:new-update}
    \end{align}
\noindent which makes it more consistent with the baseline mini-batch EM algorithm. 


We performed a preliminary hyperparameter search where we experimented with batch sizes including 512 and 16384. For the learning rate, we tested fixed values of $\alpha \in \{0.1,0.2,0.4,0.6\}$ and also employed a cosine decay schedule. The schedules included decreasing the rate from a base of $\alpha=0.4$ to a final rate of $\alpha=0.2$, and from $\alpha=0.8$ down to $\alpha=0.6$.  We also set a momentum of $0.9$ in pratice. We select the final hyperparameter setting based on performance after the first 100 epochs.

\paragraph{Gradient-Based.}
Following \citet{loconte2025sum,loconte2024relationship}, we adopt the Adam optimizer \citep{kingma2014adam}. We selected hyperparameters using a similar search criterion as our EM experiments, testing learning rates of $\{1\times 10^{-2}, 3\times 10^{-3}, 5\times 10^{-3}, 1\times 10^{-3}\}$ and batch sizes of 512 and 1024. On the ImageNet32 YCC dataset, we found a learning rate of $1\times 10^{-2}$ performed the best, which aligns with the observations in \citet{loconte2024relationship}. To ensure correctness, we first validated our implementation by reproducing prior results on the MNIST dataset, achieving a log-likelihood of -661.6 after 30 epochs.




\subsection{Details about computing resources}
We ran all the experiments included on NVIDIA A40s and NVIDIA GeForce RTX 4090.

\subsection{Convergence experiments}
\label{appx:exp-convergence}

We provide a similar experiment to \cref{tab:convergence_speed}, to demonstrate the consistency of faster convergence speed on various datasets, as shown in \cref{tab:convergence_speed_wikitext}.
\begin{table}[ht]
\centering
\caption{\textbf{Convergence speed (epochs) for HMM 256 on WikiText.} The table reports epochs to reach specific LL thresholds, with $\Delta$ representing the difference from the best LL of -722 (\cref{tab:density_estimation_text}). Lower is better. Bold marks the best result per column; $\infty$ indicates failure to reach the threshold in time.}
\label{tab:convergence_speed_wikitext}
\setlength{\tabcolsep}{4pt} 
\begin{tabular}{@{}l ccc@{}}
\toprule
\textbf{Method} & \textbf{LL $\geq$ -730} & \textbf{LL $\geq$ -724} & \textbf{LL $\geq$ -723} \\
& \small($\Delta \approx 7.8$) & \small($\Delta \approx 1.8$) & \small($\Delta \approx 0.8$) \\
\midrule
Full EM & 60 & 230 & 375 \\
Adam & $\infty$ & $\infty$ & $\infty$ \\
Mini EM & 45 & $\infty$ & $\infty$ \\
An\textbf{em}one & \textbf{30} & \textbf{115} & \textbf{275} \\
\bottomrule
\end{tabular}
\end{table}

\section{The PDHCLT Structure}
\label{appx:pdhclt}

We adopt the PDHCLT structure implemented in the PyJuice \citep{liu2024scaling} package (\texttt{pyjuice.structures.PDHCLT}). In the case of images with size $(3, 16, 16)$, we define a ``split interval'' to be $(3, 4, 4)$, which means that we partition the image into chunks of size $3 \times 4 \times 4$, resulting in $4 \times 4$ chunks. For each chunk, we adopt the HCLT structure, and the PD structure is used to connect the different chunks.

For the BioBank dataset, the sequences have length $1167$, and we partition them into chunks of size $128$ (the last chunk has a smaller size). Again, HCLT is used to model intra-chunk dependencies and the PD structure is used to capture inter-chunk dependencies.

\end{document}